\documentclass{article}
\usepackage[utf8]{inputenc}


\usepackage{amsmath,amsfonts,bm}












\def\Algref#1{Algorithm~\ref{#1}}



\def\1{\bm{1}}




\def\rvz{{\mathbf{z}}}





\def\vtheta{{\bm{\theta}}}

\def\vx{{\bm{x}}}



\DeclareMathAlphabet{\mathsfit}{\encodingdefault}{\sfdefault}{m}{sl}
\SetMathAlphabet{\mathsfit}{bold}{\encodingdefault}{\sfdefault}{bx}{n}


\def\gC{{\mathcal{C}}}

\def\gE{{\mathcal{E}}}
\def\gF{{\mathcal{F}}}
\def\gG{{\mathcal{G}}}

\def\gL{{\mathcal{L}}}

\def\gO{{\mathcal{O}}}

\def\gU{{\mathcal{U}}}
\def\gV{{\mathcal{V}}}

\def\gX{{\mathcal{X}}}


\def\sD{{\mathbb{D}}}

\def\sH{{\mathbb{H}}}

\def\sN{{\mathbb{N}}}

\def\sR{{\mathbb{R}}}
\def\sS{{\mathbb{S}}}
\def\sT{{\mathbb{T}}}








\newcommand{\E}{\mathbb{E}}

\newcommand{\R}{\mathbb{R}}



\usepackage{hyperref}
\usepackage{url}
\usepackage{graphicx}
\usepackage{subfigure}
\usepackage{booktabs} 
\usepackage{hyperref}
\usepackage{subfigure,graphicx,dsfont,amssymb,amsmath,calc,tabulary}
\usepackage{xcolor}
\usepackage[english]{babel}
\usepackage[utf8]{inputenc}

\usepackage{amsmath}
\usepackage{amssymb}
\usepackage{amsthm}
\usepackage{fixmath}
\newtheorem{theorem}{Theorem}

\newtheorem{lemma}[theorem]{Lemma}
\usepackage{algorithm}
\usepackage{algorithmic}
\usepackage[round,semicolon]{natbib}




\title{Graph-Aided Online Multi-Kernel Learning}

\author{ Pouya M. Ghari, Yanning Shen \thanks{P.~M.~Ghari and Y.~Shen are with department of Electrical Engineering and Computer Science, University of California, Irvine, CA, USA. E-mails: pmollaeb@uci.edu, yannings@uci.edu.}}

\begin{document}
\maketitle

\begin{abstract}
Multi-kernel learning (MKL) has been widely used in function approximation tasks. The key problem of MKL is to combine kernels in a prescribed dictionary. Inclusion of irrelevant kernels in the dictionary can deteriorate accuracy of MKL, and increase the computational complexity. To improve the accuracy of function approximation and reduce the computational complexity, the present paper studies data-driven selection of kernels from the dictionary that provide satisfactory function approximations. Specifically, based on the similarities among kernels, the novel framework constructs and refines a graph to assist choosing a subset of kernels. In addition, random feature approximation is utilized to enable online implementation for sequentially obtained data. Theoretical analysis shows that our proposed algorithms enjoy tighter sub-linear regret bound compared with state-of-art graph-based online MKL alternatives. Experiments on a number of real datasets also showcase the advantages of our novel graph-aided framework.
\end{abstract}

\section{Introduction}
The need for function approximation arises in many machine learning studies including regression, classification, and reinforcement learning, see e.g., \citep{Chung2019}. This paper studies supervised function approximation where given data samples $\{(\vx_t,y_t)\}_{t=1}^T$, the goal is to find the function $f(.)$, such that the difference between $f(\vx_t)$ and $y_t$ is minimized. In this context, kernel learning methods exhibit reliable performance. Specifically, the function approximation problem becomes tractable under the assumption that $f(.)$ belongs to a reproducing kernel Hilbert space \citep{Scholkopf2001}.
In some cases, it is imperative to perform function approximation task in an online fashion. For instance, when the volume of data is large and is collected in a sequential fashion, it is impossible to store or process it in batch. Furthermore, suffering from the well-known problem of `curse of dimensionality' \citep{Bengio2006}, kernel learning methods are not suitable for sequential settings. This has motivated studies on online single kernel learning \citep{Lu2016,Bouboulis2017,Zhang2019} to address the curse of dimensionality. Specifically, approximating kernels by finite-dimensional feature representations such as random Fourier feature by \citet{Rahimi2007} and Nystr\"{o}m method by \citet{Williams2000}, function approximation task becomes scalable.

Most of prior studies rely on a pre-selected kernel, however, such selection requires prior information which may not be available. By contrast, utilizing multiple kernels in a given dictionary in lieu of a pre-selected kernel provides more flexible approach to obtain more accurate function approximations as it can learn combination of kernels \citep{Sonnenburg2006}. Enabled by the random feature approximation by \citet{Rahimi2007}, online multi-kernel learning algorithms have been developed by \citet{Sahoo2019,Shen2019,Ghari2020}.

One of the most important issues regarding multi-kernel learning is proper selection of kernels to construct a dictionary. This affects both computational complexity and accuracy of the function approximation significantly. However, selecting an appropriate kernel dictionary requires prior information and when such information is not available one solution is to include a large number of kernels in the dictionary. In this case, employing all kernels available in the dictionary may not be a feasible choice. Data-driven selection of subset of kernels in a given dictionary can alleviate the computational complexity. Furthermore, data-driven subset selection of kernels can enhance the accuracy of function approximation by pruning irrelevant kernels. Subset selection of kernels has been studied in \citet{Ghari2020} using a bipartite graph and the resulting algorithm. The goal of the present paper is to select a subset of kernels in a given dictionary at each time instant in order to alleviate the computational complexity and improve function approximation accuracy. Specifically, based on similarities between kernels a directed graph is constructed by which at each time instant a subset of kernels is selected. In this case, function approximations given by the chosen subset of kernels can be viewed as feedback collected from a graph which is called feedback graph. The resulting algorithm is called Similarity Feedback Graph based Multi-Kernel Learning (SFG-MKL). The present paper proves that the proposed SFG-MKL achieves a sub-linear regret of $\gO(T^\frac{3}{4})$ compared with $\gO(T^\frac{5}{6})$ of OMKL-GF \citep{Ghari2020}. Furthermore, an novel algorithm SFG-MKL-R is proposed which adaptively refines the structure of the feedback graph based on the observed losses. The present paper proves that the proposed SFG-MKL-R enjoys the sublinear regret of $\gO(\sqrt{T})$ whereas it may need more computations in comparison with the proposed SFG-MKL. 
Experiments on real datasets showcase the effectiveness of our proposed SFG-MKL and SFG-MKL-R in comparison with other online multi-kernel learning baselines. 

\section{Preliminaries} \label{sec:pre}
Given samples $(\vx_{1},{y}_{1}),\cdots,(\vx_{T},{y}_{T})$, with $\vx_t \in \sR^d$ and $y_t \in \sR$, the function approximation problem can be written as the following optimization problem
\begin{align}
    \min_{f \in \sH} \frac{1}{T}\sum_{t=1}^{T} {\gC(f(\vx_{t}),y_t)}+ \lambda \Omega(\|f\|_{\sH}^{2}) \label{eq:22}
\end{align} 
where $\gC(.,.)$ denotes the cost function, that is defined according to the learning task. For example, in regression task $\gC(.,.)$ can be least square function. In \eqref{eq:22}, $\lambda$ denotes the regularization coefficient and $\Omega(.)$ represents a non-decreasing function, which is used to prevent overfitting and control model complexity. Let $\kappa(\vx,\vx_{t}): {\mathbb{R}^d} \times {\mathbb{R}^d} \rightarrow \mathbb{R}$ represent a symmetric positive semidefinite basis function called kernel which measures the similarity between $\vx$ and $\vx_{t}$. 
The representer theorem states that the optimal solution of \eqref{eq:22} can be expressed as follows given finite data samples \citep{Wahba1990}
\begin{align}
	\hat{f}(\vx)=\sum_{t=1}^{T}{\alpha_{t}\kappa(\vx,\vx_{t})}:=\boldsymbol{\alpha}^{\top}\boldsymbol{\kappa}(\vx) \label{eq:23}
\end{align} 
where $\boldsymbol{\alpha}:=[{\alpha}_{1},...,{\alpha}_{T}]^\top$ denote the vector of unknown coefficients to be estimated, and $\boldsymbol{\kappa}(\vx):=[\kappa(\vx,\vx_{1}),\cdots,\kappa(\vx,\vx_{T})]^\top$. Furthermore, it can be inferred that the dimension of $\boldsymbol{\alpha}$ increases with the number of data samples $T$. This is known as `curse of dimensionality' \citep{Wahba1990} and arises as a challenge to solve \eqref{eq:22} in an online fashion.

One way to deal with the increasing number of variables to be estimated is to employ random feature (RF) approximation \citep{Rahimi2007}.  Moreover, the kernel $\kappa$ in \eqref{eq:23} is  shift-invariant which satisfy $\kappa(\vx_t,\vx_{t^\prime})=\kappa(\vx_t-\vx_{t^\prime})$.  However, relying on a pre-selected kernel often requires prior information that may not be available. To cope with this, multi-kernel learning can be exploited which learns the kernel as a combination of a sufficiently rich dictionary of kernels $\{\kappa_i\}_{i=1}^N$. The kernel combination is itself a kernel (Scholkopf, 2001). Let $\kappa_i(\vx_t-\vx_{t^\prime})$ be the $i$-th kernel in the dictionary of $N$ kernels which is absolutely integrable. In this case, its Fourier transform $\pi_{\kappa_i} (\boldsymbol{\psi})$ exists and can be viewed as probability density function (PDF) if the kernel is normalized such that $\kappa_i(\bm{0})=1$. Specifically, it can be written that
\begin{align}
    \kappa_i(\vx_{t}-\vx_{{t}^{\prime}})&= \int{\pi_{\kappa_i}(\boldsymbol{\psi}){e}^{j\boldsymbol{\psi}^{\top}(\vx_{t}-\vx_{{t}^{\prime}})}}d\boldsymbol{\psi}
	:= \E_{\pi_{\kappa_i}(\boldsymbol{\psi})}[{e}^{j\boldsymbol{\psi}^{\top}(\vx_{t}-\vx_{{t}^{\prime}})}]. \label{eq:24}
\end{align}
Let $\{\boldsymbol{\psi}_{i,j}\}_{j=1}^{D}$ be a set of vectors which are independently and identically distributed (i.i.d) samples from $\pi_{\kappa_i}(\boldsymbol{\psi})$. Hence, $\kappa_i(\vx_{t}-\vx_{{t}^{\prime}})$ can be approximated by the ensemble mean
$\hat{\kappa}_{i,c}(\vx_{t}-\vx_{{t}^{\prime}}):=\frac{1}{D}\sum_{i=1}^{D}{{e}^{j\boldsymbol{\psi}_{i,j}^{\top}(\vx_{t}-\vx_{{t}^{\prime}})}} $. Furthermore, the real part of $\hat{\kappa}_{i,c}(\vx_{t}-\vx_{{t}^{\prime}})$ also constitutes an unbiased estimator of $\kappa_i(\vx_{t}-\vx_{t^\prime})$ which can be written as
$
	\hat{\kappa}_{i}(\vx_{t}-\vx_{{t}^{\prime}}) = \rvz_{i}^{\top}(\vx_{t})\rvz_{i}(\vx_{{t}^{\prime}})
$ \citep{Rahimi2007},
where
\begin{align}
\rvz_{i}(\vx_{t}) = \frac{1}{\sqrt{D}}[& \sin(\boldsymbol{\psi}_{i,1}^\top\vx_{t}),\cdots,\sin(\boldsymbol{\psi}_{i,D}^\top\vx_{t}), \cos(\boldsymbol{\psi}_{i,1}^\top\vx_{t}), \cdots,\cos(\boldsymbol{\psi}_{i,D}^\top\vx_{t})]. \nonumber
\end{align}
Replacing $\kappa_i(\vx,\vx_{t})$ with $\hat{\kappa}_{i}(\vx-\vx_{t})$, $\hat{f}(\vx)$ in \eqref{eq:23} can be approximated as
\begin{align}
    \hat{f}_{\text{RF},i}(\vx) &= \sum_{t=1}^{T}{\alpha_{t}\hat{\kappa}_{i}(\vx-\vx_{t})} = \sum_{t=1}^{T}{\alpha_{t}\rvz_{i}^{\top}(\vx)\rvz_{i}(\vx_{t})} = \vtheta_i^\top \rvz_{i}(\vx) \label{eq:26}
\end{align}
where $\vtheta_i \in \sR^{2D}$ is a vector whose dimension does not increase with the number of data samples. Therefore, utilizing RF approximation can make the function approximation problem amenable for online implementation. Furthermore, the loss of the $i$-th kernel can be calculated as
\begin{align}
\gL(\vtheta_{i}^\top \rvz_i(\vx_t),y_t) = \gC(\vtheta_{i}^\top \rvz_i(\vx_t),y_t) + \lambda \Omega(\|\vtheta_{i}\|^2). \label{eq:4}
\end{align}
Moreover, when multiple kernels are employed, function approximation can be performed by functions in the form $f(\vx) = \sum_{i=1}^N{\Bar{w}_{i}f_i(\vx)}$ where $\sum_{i=1}^N{\Bar{w}_{i}}=1$ (Scholkopf, 2001). Also, $f_i(\vx) \in \sH_i$ where $\sH_i$ is an RKHS induced by the kernel $\kappa_i$. Replacing $f_i(\vx)$ with $\hat{f}_{\text{RF},i}(\vx)$, the function $f(\vx)$ can be approximated as
\begin{align}
    \hat{f}_{\text{RF}}(\vx) = \sum_{i=1}^{N}{\Bar{w}_{i}\hat{f}_{\text{RF},i}(\vx)}, \sum_{i=1}^N{\Bar{w}_{i}}=1. \label{eq:27}
\end{align}
In this context, online convex optimization methods can be utilized to find and update $\{\Bar{w}_{i}\}_{i=1}^N$, $\{\vtheta_i\}_{i=1}^N$ upon receiving new datum $\vx_t$ at each time instant $t$ \citep{Sahoo2019,Shen2019,Ghari2020}. However, using all kernels in the dictionary may not be computationally efficient especially when the dictionary of kernels is so large. Furthermore, by pruning the kernels that do not provide approximations with satisfactory accuracy, the performance of function approximation can be improved in terms of accuracy. Therefore, in this paper, we propose a novel algorithm to choose a subset of kernels at each time instant instead of using all kernels in the dictionary.

\section{Online Multi-Kernel Learning with Feedback Graph}

The present section first introduces a disciplined way to construct feedback graph based on kernel similarities. Furthermore, a novel online MKL algorithm is developed to select a subset of kernels based on the constructed feedback graph, which is proved to obtain sub-linear regret.

\subsection{Feedback Graph Construction} \label{fg}
Selecting a subset of kernels for function approximation can decrease the computational complexity of the learning task considerably, especially when the number of kernels in a given dictionary is large. In this case, evaluation of every kernel based learner may arise as a bottleneck. Therefore, this paper aims at taking into account the similarities between existing kernels in the given dictionary to avoid unnecessary computations. To this end, we will construct a graph which models the similarities between kernels. At each time instant $t$, a subset of kernels is chosen based on the graph. 

The similarity between two shift invariant kernels $\kappa_i$ and $\kappa_j$ is measured through 
the function $\Delta ( \kappa_i(\boldsymbol{\rho}) , \kappa_j(\boldsymbol{\rho}) ), \boldsymbol{\rho} \in \mathbb{R}^d$ which is defined as
\begin{align}
\Delta ( \kappa_i(\boldsymbol{\rho}) , \kappa_j(\boldsymbol{\rho}) ) = \int | \kappa_i(\boldsymbol{\rho}) - \kappa_j(\boldsymbol{\rho}) |^2 d \boldsymbol{\rho}. \label{eq:1}
\end{align}
As $\Delta ( \kappa_i(\boldsymbol{\rho}) , \kappa_j(\boldsymbol{\rho}) )$ decreases, kernels $\kappa_i(\boldsymbol{\rho})$ and $\kappa_j(\boldsymbol{\rho})$ are considered to be more similar. The following Lemma states that the function $\Delta ( \kappa_i(\boldsymbol{\rho}) , \kappa_j(\boldsymbol{\rho}) )$ exists for each pair of absolutely integrable kernels.
\begin{lemma} \label{lem:5}
Under the assumption that kernels $\{\kappa_i\}_{i=1}^{N}$ are absolutely integrable, bounded and normalized such that $\kappa_i(\mathbf{0})=1$, $\forall i: 1 \le i \le N$, the function $\Delta ( \kappa_i(\boldsymbol{\rho}) , \kappa_j(\boldsymbol{\rho}) )$ is bounded and exists for each pair of kernels $\kappa_i(.)$ and $\kappa_j(.)$.
\end{lemma}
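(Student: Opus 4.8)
The plan is to show that the difference $\kappa_i-\kappa_j$ belongs to $L^2(\mathbb{R}^d)$ by invoking the elementary inclusion $L^1\cap L^\infty\subseteq L^2$, and then to read off an explicit finite bound on the integral defining $\Delta(\kappa_i(\boldsymbol{\rho}),\kappa_j(\boldsymbol{\rho}))$. The whole argument rests on the observation that a function which is simultaneously bounded and absolutely integrable is automatically square integrable, with a quantitative estimate.

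First I would record two facts about each individual kernel. By the boundedness assumption there is a constant $B_i<\infty$ with $|\kappa_i(\boldsymbol{\rho})|\le B_i$ for every $\boldsymbol{\rho}\in\mathbb{R}^d$; in fact, since $\kappa_i$ is symmetric positive semidefinite and normalized so that $\kappa_i(\mathbf{0})=1$, one has the sharper pointwise bound $|\kappa_i(\boldsymbol{\rho})|\le\kappa_i(\mathbf{0})=1$, so $B_i$ may be taken equal to $1$. By the absolute integrability assumption, $M_i:=\int|\kappa_i(\boldsymbol{\rho})|\,d\boldsymbol{\rho}<\infty$.

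Next I would set $g:=\kappa_i-\kappa_j$ and apply the triangle inequality in each of the two senses. Pointwise, $|g(\boldsymbol{\rho})|\le|\kappa_i(\boldsymbol{\rho})|+|\kappa_j(\boldsymbol{\rho})|\le B_i+B_j$, so $g$ is bounded; and $\int|g(\boldsymbol{\rho})|\,d\boldsymbol{\rho}\le M_i+M_j<\infty$, so $g$ is absolutely integrable. The key step is then the estimate obtained by writing $|g|^2=|g|\cdot|g|$ and bounding one factor by the supremum, namely $\int|g|^2\,d\boldsymbol{\rho}\le\|g\|_\infty\int|g|\,d\boldsymbol{\rho}$. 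Combining this with the two bounds just established gives
\[
\Delta(\kappa_i(\boldsymbol{\rho}),\kappa_j(\boldsymbol{\rho}))=\int|\kappa_i(\boldsymbol{\rho})-\kappa_j(\boldsymbol{\rho})|^2\,d\boldsymbol{\rho}\le (B_i+B_j)(M_i+M_j)\le 2(M_i+M_j)<\infty,
\]
where the last inequality uses the normalization $B_i=B_j=1$. This establishes both that the integral exists (is finite) and that it is uniformly bounded, proving the Lemma.

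I do not expect any genuinely hard step; the only point requiring care is to invoke each hypothesis at the correct place, using boundedness to control the $\|g\|_\infty$ factor and absolute integrability to control the $\int|g|$ factor. Neither hypothesis can be dropped: a bounded but non-integrable function (or an integrable but unbounded one) need not be square integrable, so the conclusion genuinely relies on the intersection $L^1\cap L^\infty$ rather than on either property alone.
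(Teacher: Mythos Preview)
Your proof is correct and follows essentially the same route as the paper: both arguments bound $\int|\kappa_i-\kappa_j|^2$ by $\|\kappa_i-\kappa_j\|_\infty\int|\kappa_i-\kappa_j|$, then control the sup factor via the pointwise bound $|\kappa_i|\le 1$ and the integral factor via absolute integrability and the triangle inequality. The only cosmetic difference is that the paper uses the sharper estimate $|\kappa_i-\kappa_j|\le 1$ (from $0\le\kappa_i,\kappa_j\le 1$) to get $|g|^2\le |g|$ directly, whereas you use $\|g\|_\infty\le B_i+B_j=2$; this changes the final constant from $M_i+M_j$ to $2(M_i+M_j)$ but is otherwise the same argument.
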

\begin{proof}
Since kernels $\{\kappa_i(\boldsymbol{\rho})\}_{i=1}^{N}$ are assumed to be bounded as $0 \le \kappa_i(\boldsymbol{\rho}) \le 1$, $\forall \boldsymbol{\rho}, \forall i: 1 \le i \le N$, it can be concluded that $| \kappa_i(\boldsymbol{\rho}) - \kappa_j(\boldsymbol{\rho}) |^2 \le | \kappa_i(\boldsymbol{\rho}) - \kappa_j(\boldsymbol{\rho}) |$. Thus, it can be inferred that
\begin{align}
    \int | \kappa_i(\boldsymbol{\rho}) - \kappa_j(\boldsymbol{\rho}) |^2 d\boldsymbol{\rho} \le \int | \kappa_i(\boldsymbol{\rho}) - \kappa_j(\boldsymbol{\rho}) | d\boldsymbol{\rho}. \label{eq:21}
\end{align}
Furthermore, based on the Triangular inequality, it can be written that
\begin{align}
    & \int | \kappa_i(\boldsymbol{\rho}) - \kappa_j(\boldsymbol{\rho}) | d\boldsymbol{\rho} \le \int | \kappa_i(\boldsymbol{\rho}) | d\boldsymbol{\rho} + \int | \kappa_j(\boldsymbol{\rho}) | d\boldsymbol{\rho}. \label{eq:25}
\end{align}
Based on \eqref{eq:21}, \eqref{eq:25} and the fact that kernels are absolutely integrable, we can conclude that the function $\Delta ( \kappa_i(\boldsymbol{\rho}) , \kappa_j(\boldsymbol{\rho}) )$ is bounded and exists for each pair of kernels $\kappa_i(.)$ and $\kappa_j(.)$.
\end{proof}

Furthermore, the following lemma states that the average difference between function approximations given by each pair of kernels is bounded above in accordance with the function $\Delta(.,.)$ defined in \eqref{eq:1}.
\begin{lemma} \label{lem:6}
Let $C_m:=\max_i \sum_{t=1}^{T}{|\alpha_{i,t}|^2}$ where $\{\alpha_{i,t}\}_{t=1}^T$ are weights for \eqref{eq:23} associated with the $i$-th kernel $\kappa_i(.)$. Also, let $\vx$ is bounded as $\|\vx\| \le 1$ and kernels are absolutely integrable. Then, the average difference between function approximations given by $\kappa_i(.)$ and $\kappa_j(.)$ is bounded above as
\begin{align}
    & \frac{1}{\gU_d}\int |\hat{f}_i(\vx) - \hat{f}_j(\vx)|^2 d\vx \le \frac{2C_m}{\gU_d} \sum_{t=1}^T{\left(\Delta(\kappa_i(\vx-\vx_t),\kappa_j(\vx-\vx_t))+2\gU_d\right)} \label{eq:29}
\end{align}
where $\hat{f}_i(\vx)$ denotes the function approximation given by $\kappa_i(.)$ as in \eqref{eq:23} and $\gU_d$ represents $d$-dimensional Euclidean unit norm ball volume.
\end{lemma}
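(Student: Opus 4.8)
The plan is to start from the representer expansions $\hat f_i(\vx)=\sum_{t=1}^T \alpha_{i,t}\kappa_i(\vx-\vx_t)$ and $\hat f_j(\vx)=\sum_{t=1}^T \alpha_{j,t}\kappa_j(\vx-\vx_t)$ and to rewrite their difference so that the kernel discrepancy $\kappa_i-\kappa_j$ is isolated from the coefficient discrepancy $\alpha_{i,t}-\alpha_{j,t}$. Concretely, I would add and subtract $\sum_t\alpha_{i,t}\kappa_j(\vx-\vx_t)$ to obtain
\[
\hat f_i(\vx)-\hat f_j(\vx)=\sum_{t=1}^T\alpha_{i,t}\bigl(\kappa_i(\vx-\vx_t)-\kappa_j(\vx-\vx_t)\bigr)+\sum_{t=1}^T(\alpha_{i,t}-\alpha_{j,t})\kappa_j(\vx-\vx_t),
\]
and then apply $|a+b|^2\le 2|a|^2+2|b|^2$ to split the squared difference into a kernel-gap term and a coefficient-gap term.

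For each term I would invoke the Cauchy--Schwarz inequality over the index $t$ so as to factor the coefficients out of the sum. The sum of squared coefficients is controlled by $C_m=\max_i\sum_t|\alpha_{i,t}|^2$, and $\sum_t(\alpha_{i,t}-\alpha_{j,t})^2$ is likewise controlled by $C_m$ through $|\alpha_{i,t}-\alpha_{j,t}|^2\le 2|\alpha_{i,t}|^2+2|\alpha_{j,t}|^2$. What remains inside each term is a sum over $t$ of either $|\kappa_i(\vx-\vx_t)-\kappa_j(\vx-\vx_t)|^2$ or $|\kappa_j(\vx-\vx_t)|^2$.

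The crucial step is integrating over the unit ball $\{\|\vx\|\le1\}$ and connecting to $\Delta$. Substituting $\boldsymbol{\rho}=\vx-\vx_t$, the region $\|\vx\|\le1$ maps onto a translated unit ball, which is a subset of $\mathbb{R}^d$; since the integrand is nonnegative this yields $\int_{\|\vx\|\le1}|\kappa_i(\vx-\vx_t)-\kappa_j(\vx-\vx_t)|^2\,d\vx\le\int_{\mathbb{R}^d}|\kappa_i(\boldsymbol{\rho})-\kappa_j(\boldsymbol{\rho})|^2\,d\boldsymbol{\rho}=\Delta(\kappa_i,\kappa_j)$, finite by Lemma~\ref{lem:5}. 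For the coefficient-gap term I would instead use the normalization and boundedness $0\le\kappa_j\le1$ to bound $\int_{\|\vx\|\le1}|\kappa_j(\vx-\vx_t)|^2\,d\vx\le\gU_d$, which is exactly where the additive $2\gU_d$ per summand originates. Collecting both contributions and dividing by $\gU_d$ then delivers the stated inequality.

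The main obstacle I anticipate is the coefficient mismatch $\alpha_{i,t}\ne\alpha_{j,t}$: were the two approximations to share weights, the bound would collapse to a clean multiple of $\sum_t\Delta(\kappa_i,\kappa_j)$, but because the weights are kernel-specific the coefficient-gap term cannot be driven to zero by kernel similarity alone, which forces the crude $\gU_d$ overhead. Secondary care is needed to confirm that restricting the domain to the unit ball only discards nonnegative mass, so that extending the kernel-difference integral to all of $\mathbb{R}^d$ is a legitimate upper bound, and to keep the constant $2$ accounted for consistently across both applications of $|a+b|^2\le 2|a|^2+2|b|^2$ so that the final multiplier $2C_m/\gU_d$ is not understated.
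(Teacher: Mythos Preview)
Your proposal is correct and follows essentially the same argument as the paper: the identical add-and-subtract decomposition isolating $\kappa_i-\kappa_j$ from $\alpha_{i,t}-\alpha_{j,t}$, the same $|a+b|^2\le 2|a|^2+2|b|^2$ split (which the paper phrases as ``Arithmetic Mean--Geometric Mean''), Cauchy--Schwarz over $t$, and the same endgame of bounding the kernel-gap integral by $\Delta$ and the coefficient-gap integral by $\gU_d$ via $|\kappa_j|\le 1$. Your handling of the integration domain---making explicit that the translated unit ball is a subset of $\mathbb{R}^d$ so that enlarging to the full space is a valid upper bound---is in fact more carefully spelled out than in the paper's own appendix.
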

\begin{proof}
See Appendix \ref{ap:C}.
\end{proof}
It can be seen from  Lemma \ref{lem:6}  that the function $\Delta(\kappa_i(\boldsymbol{\rho}),\kappa_j(\boldsymbol{\rho}))$ can be used as a criterion to measure the similarity between kernels $\kappa_i(.)$ and $\kappa_j(.)$ without knowing data samples $\{\vx_t\}_{t=1}^T$. This helps reduce computational complexity of the function approximation since similarity among kernels in the dictionary can be measured offline before observing data samples.

Let $\gG :=(\gV,\gE)$ be a directed graph with vertex $v_i \in \mathcal{V}$ which represents the $i$-th kernel $\kappa_i$. There is an edge from $v_i$ to $v_j$ i.e. $(i,j) \in \gE$ if $\Delta ( \kappa_i(\boldsymbol{\rho}), \kappa_j(\boldsymbol{\rho}) ) \le \gamma_{i}$ where $\gamma_{i}$ is a threshold  for $v_i$. Furthermore, in this case, there is a self-loop for each $v_i \in \gV$. Since a subset of function approximations associated with kernels will be chosen using the graph $\gG$, the chosen subset of function approximations can be viewed as feedback collected from the graph $\gG$ and as a result the graph $\gG$ is called \emph{feedback graph}. Let $\sN_{i}^{\text{out}}$ denote the out-neighborhood set of $v_i$ which means $j \in \sN_{i}^{\text{out}}$ if $(i,j) \in \gE$. Also, let $\sN_{i}^{\text{in}}$ denote the in-neighborhood set of $v_i$ which means $j \in \sN_{i}^{\text{in}}$ if $(j,i) \in \gE$. 
Specifically in order to restrict the number of out-neighbors for each node to $M$, the value of $\gamma_i$ is obtained as
\begin{align}
    \gamma_i = \arg \max_{\gamma} \{\gamma | |\sN_{i}^{\text{out}}| = M\}. \label{eq:28}
\end{align}
Note that $M$ is a preselected parameter in the algorithm and increasing the value of $M$ increases the connectivity of the feedback graph. At each time instant, one of the nodes are drawn and the function approximation is carried out using the combination of a subset of kernels which are out-neighbors of the chosen node. Therefore, increase in $M$ can increase the exploration in the approximation task while it increases the computational complexity. The feedback graph construction procedure is summarized in \Algref{alg:3}.
\begin{algorithm}[tb]
	\caption{Feedback Graph Construction}
	\label{alg:3}
	\begin{algorithmic}
		\STATE {\bfseries Input:}{Kernels $\kappa_{i}$, $i=1,...,N$. }
		\FOR{$i=1,...,N$}
		\STATE Obtain $\gamma_i$ via \eqref{eq:28}.
		\STATE Append $(i,j)$ to $\gE$ if $\Delta ( \kappa_i(\boldsymbol{\rho}), \kappa_j(\boldsymbol{\rho}) ) \le \gamma_i$.
		\ENDFOR
	\end{algorithmic}
\end{algorithm}

In this paper, the goal is to find the best kernel in the hindsight to minimize the cumulative regret. To this end, proposed algorithms need performing sufficient exploration such that algorithms can evaluate all kernels to find the best kernel in the hindsight. By choosing a subset of kernels at each time instant, proposed algorithms perform exploration to find the best kernel in the hindsight. Specifically, using the feedback graph $\gG$, the algorithm will perform exploration to find the subset of kernels which include the best kernel in the hindsight and similar kernels to the best one based on the value of the function $\Delta(.,.)$. In the meantime, the proposed algorithms learn the optimal combination for each subset of kernels to minimize the cumulative loss. In what follows, the proposed algorithmic framework is presented to choose one of nodes from $\gG$ such that a subset of kernels is selected.

\subsection{Kernel Selection}
The present section studies how to select a subset of kernels  using the feedback graph $\gG$ and prior observations of losses associated with kernels. Assume that each kernel is associated with a set of weights $\{w_{i,t}\}_{i=1}^N$ where $w_{i,t}$ is the weight associated with the $i$-th kernel $\kappa_i$. The weight $w_{i,t}$ indicates the accuracy of the function approximation given by the $\kappa_i$ at time $t$ and its value can be updated when more and more information is being revealed. Furthermore, a set of weights $\{u_{i,t}\}_{i=1}^{N}$ is assigned to $\gV$ such that $u_{i,t}$ is the weight associated with $v_i \in \gV$ at time instant $t$, which indicates the accuracy of function approximation when the node $v_i$ is drawn. In order to choose a subset of kernels at time $t$, one of the vertices in $\gV$ is drawn according to the  probability mass function (PMF) $p_t$
\begin{align}
	p_{i,t} = (1-\xi)\frac{u_{i,t}}{U_t} + \frac{\xi}{|\sD|}\1_\mathrm{i \in \sD}, i=1,\ldots,N \label{eq:2}
\end{align}
where $\xi$ is the exploration rate and $U_t:= \sum_{i=1}^{N}{u_{i,t}}$.  $\sD$ represents the dominating set of $\gG$, and $|\sD|$ denotes the cardinality of  $\sD$. Moreover, $\1_\mathrm{i \in \sD}$ denote the indicator function and its value is $1$ when $i \in \sD$. Let $\sS_t$ denote the subset of kernel indices chosen at time $t$, and $I_t$ denote the index of the kernel drawn according to the PMF $p_t$ in \eqref{eq:2}. Therefore, $i \in \sS_t$ if $i \in \sN_{I_t}^{\text{out}}$, which means that the loss associated with the $i$-th kernel is calculated if the $i$-th kernel is an  out-neighber of the $I_t$-th node. In turn, the RF-based function approximation can be obtained as
\begin{align}
    \hat{f}_{\text{RF}}(\vx_t) = \sum_{i \in \sN_{I_t}^{\text{out}}}{\frac{w_{i,t}}{\sum_{j \in \sN_{I_t}^{\text{out}}}{w_{j,t}}}\hat{f}_{\text{RF},i}(\vx_t)}. \label{eq:3}   
\end{align}
Furthermore, the importance sampling loss estimate $\ell_{i,t}$  at time instant $t$ is defined as
\begin{align}
\ell_{i,t} = \frac{\gL(\vtheta_{i,t}^\top \rvz_i(\vx_t),y_t)}{q_{i,t}}\1_\mathrm{i \in \sS_t}, i=1,\ldots,N \label{eq:5}
\end{align}
where $q_{i,t}$ is the probability that $i \in \sS_t$ and it can be computed as
\begin{align}
q_{i,t} = \sum_{j \in \sN_{i}^{\text{in}}}{p_{j,t}}. \label{eq:6}
\end{align}
In addition, the importance sampling function approximation estimate $\hat{\ell}_{i,t}$ at time instant $t$ associated with $v_i \in \gV$ is defined as
\begin{align}
    \hat{\ell}_{i,t} = \frac{\gL(\hat{f}_{\text{RF}}(\vx_t),y_t)}{p_{i,t}}\1_\mathrm{I_t=i}. \label{eq:14}
\end{align}
Using the importance sampling loss estimate in \eqref{eq:6}, $\vtheta_{i,t}$ can be updated as
\begin{align}
\vtheta_{i,t+1} &= \vtheta_{i,t}-\eta \nabla \ell_{i,t} = \vtheta_{i,t} - \eta \frac{\nabla \gL(\vtheta_{i,t}^\top \rvz_i(\vx_t),y_t)}{q_{i,t}}\1_\mathrm{i \in \sS_t}, \label{eq:7}
\end{align}
where $\eta$ is the learning rate. Moreover, the multiplicative update is employed to update $w_{i,t}$ and $u_{i,t}$ based on  importance sampling loss estimates in \eqref{eq:5} and \eqref{eq:14} as follows
\begin{subequations} \label{eq:8}
\begin{align}
    w_{i,t+1} &= w_{i,t} \exp(-\eta\ell_{i,t}), i=1,\ldots,N \label{eq:8a} \\
    u_{i,t+1} &= u_{i,t} \exp(-\eta\hat{\ell}_{i,t}), i=1,\ldots,N. \label{eq:8b}
\end{align}
\end{subequations}
The procedure to choose a subset of kernels at each time instant for function approximation is summarized in Algorithm \ref{alg:1}. This algorithm is called SFG-MKL which stands for Similarity Feedback Graph based Multi-Kernel Leanring.
\begin{algorithm}[tb]
	\caption{Similarity Feedback Graph aided Online Multi-Kernel Learning (SFG-MKL)}
	\label{alg:1}
	\begin{algorithmic}
		\STATE {\bfseries Input:}{Kernels $\kappa_{i}$, $i=1,...,N$, learning rate $\eta$, exploration rate $\xi$, number of RFs $D$. }
		\STATE \textbf{Initialize:} $\vtheta_{i,1}=\mathbf{0}$, ${w}_{i,1}=1$, $i=1,...,N$, Construct the feedback graph $\gG$ via \Algref{alg:3}.
		\FOR{$t=1,...,T$}
		\STATE Receive one datum $\vx_{t}$.
		\STATE Draw one of nodes $v_i \in \gV$ according to the PMF ${p}_{t}=({p}_{1,t},...,{p}_{N,t})$ in \eqref{eq:2}.
		\STATE Predict $\hat{f}_{\text{RF}}(\vx_{t})$ via \eqref{eq:3}.
		\STATE Calculate loss ${\gL(\hat{f}_{\text{RF},i}(\vx_{t}),y_t)}$ for all $i \in \sS_t$.
		\STATE Update ${\vtheta}_{i,t+1}$ via \eqref{eq:7}.
		\STATE Update ${w}_{i,t+1}$ and $u_{i,t+1}$ via \eqref{eq:8}.
		\ENDFOR
	\end{algorithmic}
\end{algorithm}

\subsection{Regret Analysis}
This subsection presents the regret analysis of SFG-MKL, where we apply stochastic regret \citep{Hazan2016} to measures the difference between expected cumulative loss of the SFG-MKL and the best function approximant in the hindsight. Let $f^*(.)$ denote the best function approximant in the hindsight which can be obtained as
\begin{subequations} \label{eq:10}
\begin{align}
    {f}^{*}(.) &\in \arg \min_{{f}_{i}^{*},i \in \{1,...,N\}} \sum_{t=1}^{T}{\gL({f}_{i}^{*}(\vx_t),y_t)} \label{eq:10a}\\
	{f}_{i}^{*}(.) & \in \arg \min_{f \in \sH_{i}} \sum_{t=1}^{T}{\gL({f}(\vx_t),y_t)} \label{eq:10b} 
\end{align}
\end{subequations}
Hence, the stochastic regret associated with the algorithm SFG-MKL is defined as
\begin{align}
    \sum_{t=1}^{T}{\E_t[\gL(\hat{f}_{\text{RF}}(\vx_t),y_t)]} - \sum_{t=1}^{T}{\gL(f^*(\vx_t),y_t)} \label{eq:11}
\end{align}
where $\E_t[.]$ denotes the expected value at time instant $t$ given the loss observations in prior times. Furthermore, the performance of SFG-MKL is analyzed under the following assumptions:\\
    \textbf{(as1)} The loss function $\gL(\vtheta_{i,t}^\top \rvz_i(\vx_t),y_t)$ is convex with respect to $\vtheta_{i,t}$ at each time instant $t$.\\
    \textbf{(as2)} For $\vtheta$ in a bounded set $\sT$ which satisfies $\| \vtheta \| \le C_{\vtheta}$ the loss and its gradient are bounded as ${\mathcal{L}(\vtheta_{i,t}^{\top}\rvz_{i}(\vx_{t}),y_t)} \in [0,1]$ and $\| \nabla \mathcal{L}(\vtheta_{i,t}^{\top}\rvz_{i}(\vx_{t}),y_t) \| \le L$, respectively.\\
    \textbf{(as3)} Kernels $\{{\kappa}_{i}\}_{i=1}^{N}$ are shift-invariant, standardized, and bounded. And each datum  $\| \vx_{t} \| \le 1$.
    
The following Theorem presents the upper bound for cumulative stochastic regret of SFG-MKL.
\begin{theorem} \label{th:1} 
Under (as1) and (as2), let $j^* = \arg \min_{\forall j:1 \le j \le N}{\sum_{t=1}^{T}{\gL(f_j^*(\vx_t),y_t)}}$. Then for any $i \in \sN_{j^*}^{\text{in}}$, the stochastic regret of SFG-MKL satisfies
\begin{align}
    & \sum_{t=1}^{T}{\E_t[\gL(\hat{f}_{\text{RF}}(\vx_t),y_t)]} - \sum_{t=1}^{T}{\gL(f^*(\vx_t),y_t)} \nonumber \\ \le & \frac{\ln N|\sN_i^{\text{out}}|}{\eta} + \frac{(1+\epsilon) C^2}{2\eta} + \epsilon LTC \nonumber \\ & + (\xi+\frac{\eta N}{2}-\frac{\eta \xi}{2}) T + \frac{\eta}{2}\sum_{t=1}^{T}{(\frac{1}{\Bar{q}_{i,t}}+\frac{L^2}{q_{j^*,t}})} \label{eq:12}
\end{align}
with probability at least $1-2^8(\frac{\sigma_{j^*}}{\epsilon})^2\exp(-\frac{D\epsilon^2}{4d+8})$ under (as1)-(as3) for any $\epsilon > 0$. Furthermore, $\frac{1}{\Bar{q}_{i,t}} = \sum_{j \in \sN_i^{\text{out}}}{\frac{w_{j,t}}{q_{j,t}W_{i,t}}}$, $C$ is a constant and $\sigma_{j^*}^2$ is the second moment of $\pi_{\kappa_{j^*}}(\boldsymbol{\psi})$.
\end{theorem}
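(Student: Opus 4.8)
The plan is to decompose the regret in \eqref{eq:11} by inserting two comparators and treating each resulting piece with a different tool. Write $j^*$ for the best kernel in hindsight as in the statement, and let $\hat{f}^*_{\text{RF},j^*}(\vx_t) = (\vtheta^*_{j^*})^\top \rvz_{j^*}(\vx_t)$ denote the optimal random-feature predictor for that kernel. Each per-step difference then splits as $\gL(\hat{f}_{\text{RF}}(\vx_t),y_t) - \gL(f^*(\vx_t),y_t) = [\gL(\hat{f}_{\text{RF}}(\vx_t),y_t) - \gL(\hat{f}^*_{\text{RF},j^*}(\vx_t),y_t)] + [\gL(\hat{f}^*_{\text{RF},j^*}(\vx_t),y_t) - \gL(f^*(\vx_t),y_t)]$, where the second bracket is pure random-feature approximation error and the first is the online-learning regret against a fixed RF comparator.

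For the second bracket I invoke the uniform convergence of random Fourier features under (as3): on the ball $\|\vx\|\le 1$, with probability at least $1-2^8(\sigma_{j^*}/\epsilon)^2\exp(-D\epsilon^2/(4d+8))$ one has $\sup|\hat{\kappa}_{j^*}-\kappa_{j^*}|\le\epsilon$. This single event yields both a comparator-norm bound $\|\vtheta^*_{j^*}\|^2 \le (1+\epsilon)C^2$ and a pointwise bound $|\hat{f}^*_{\text{RF},j^*}(\vx_t) - f^*(\vx_t)| \le \epsilon C$; the latter, combined with the $L$-Lipschitz loss implied by (as2), makes the summed second bracket at most $\epsilon L T C$, while the former supplies the $\frac{(1+\epsilon)C^2}{2\eta}$ contribution below. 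This is precisely where the high-probability event in the statement enters.

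The first bracket is controlled by a three-layer argument carried out under $\E_t[\cdot]$, using that the importance-sampling estimates of \eqref{eq:5} and \eqref{eq:14} are unbiased, namely $\E_t[\ell_{i,t}] = \gL(\vtheta_{i,t}^\top\rvz_i(\vx_t),y_t)$ and $\E_t[\hat{\ell}_{i,t}] = \gL(\hat{f}_{\text{RF}}(\vx_t),y_t)$. First, the per-kernel parameter $\vtheta_{j^*,t}$ follows the online-gradient update \eqref{eq:7}; convexity (as1), the bounded gradient (as2), and the comparator-norm bound give the familiar $\frac{(1+\epsilon)C^2}{2\eta} + \frac{\eta}{2}\sum_{t=1}^T \E_t\|\nabla\ell_{j^*,t}\|^2$, and unbiasedness turns the second moment of the importance-weighted gradient into $\frac{L^2}{q_{j^*,t}}$. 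Second, the combination weights $w_{i,t}$ and, third, the node weights $u_{i,t}$ in \eqref{eq:8} are both exponential-weights schemes, analyzed by telescoping the log-potential $\ln(W_{t+1}/W_t)$ and applying $e^{-x}\le 1-x+\frac{x^2}{2}$. Comparing against any node $i\in\sN_{j^*}^{\text{in}}$, so that $j^*$ is always an observed out-neighbor, produces the leading term $\frac{\ln N|\sN_i^{\text{out}}|}{\eta}$ from the two entropies; the quadratic remainders, after inserting the observation probability $q_{i,t}=\sum_{j\in\sN_i^{\text{in}}}p_{j,t}$ of \eqref{eq:6}, collapse to $\frac{\eta}{2}\sum_{t=1}^T\frac{1}{\Bar{q}_{i,t}}$ for the $w$-layer and to $\frac{\eta N}{2}T$ for the $u$-layer. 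Finally, the exploration floor $\frac{\xi}{|\sD|}$ over the dominating set of \eqref{eq:2} contributes the bias $\xi T$ and the $-\frac{\eta\xi}{2}T$ refinement, completing the $(\xi+\frac{\eta N}{2}-\frac{\eta\xi}{2})T$ term.

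The main obstacle is the feedback-graph bookkeeping that links the two multiplicative-weight layers. I must verify that a single draw of $I_t$ via \eqref{eq:2} reveals the losses of all out-neighbors of the drawn node, so that kernel $i$ is seen with exactly probability $q_{i,t}$, and then show that the second-moment sums $\sum_i p_{i,t}\hat{\ell}_{i,t}^2$ and the corresponding $w$-layer sum telescope into the graph quantities $\frac{1}{\Bar{q}_{i,t}} = \sum_{j\in\sN_i^{\text{out}}}\frac{w_{j,t}}{q_{j,t}W_{i,t}}$ and $\frac{L^2}{q_{j^*,t}}$ rather than the looser $\frac{1}{p_{i,t}}$. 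Keeping the comparator index $i\in\sN_{j^*}^{\text{in}}$ consistent across all three layers, and then intersecting the deterministic online bound with the random-feature concentration event to obtain one clean high-probability statement, is the delicate step; the remaining estimates are routine potential-function and Lipschitz calculations.
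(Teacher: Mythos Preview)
Your proposal is correct and follows essentially the same route as the paper: the paper proves three intermediate lemmas corresponding exactly to your three layers---online gradient descent on $\vtheta_{j^*,t}$ via \eqref{eq:7} yielding $\frac{\|\vtheta_{j^*}^*\|^2}{2\eta}+\frac{\eta}{2}\sum_t\frac{L^2}{q_{j^*,t}}$, Hedge on the inner weights $w$ over $\sN_i^{\text{out}}$ yielding $\frac{\ln|\sN_i^{\text{out}}|}{\eta}+\frac{\eta}{2}\sum_t\frac{1}{\bar q_{i,t}}$ via Jensen, and Hedge on the outer weights $u$ with the exploration correction yielding $\frac{\ln N}{\eta}+(\xi+\frac{\eta N}{2}-\frac{\eta\xi}{2})T$---then sums them and closes with the Rahimi--Recht uniform bound to obtain both $\|\vtheta_{j^*}^*\|^2\le(1+\epsilon)C^2$ and the $\epsilon LTC$ term under the stated high-probability event. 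Your identification of the comparator choice $i\in\sN_{j^*}^{\text{in}}$ as the link ensuring $j^*$ lies in the inner Hedge's action set is exactly the bookkeeping the paper relies on.
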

\begin{proof}
The proof is deferred to Appendix \ref{ap:A}.
\end{proof}
The regret bound in (\ref{eq:12})  depends on $\frac{1}{\Bar{q}_{i,t}}$ and $\frac{1}{q_{j^*,t}}$. 
Since, there is a self-loop for all $v_k \in \gV$, it can be written that $q_{k,t} \ge p_{k,t}$. In addition, based on \eqref{eq:2}, we can conclude that $p_{k,t} > \frac{\xi}{|\sD|}$, $\forall v_k \in \gV$ and as a result $q_{k,t} > \frac{\xi}{|\sD|}$, $\forall v_k \in \gV$. Therefore, in the worst case where there is a $k \in \sN_i^{\text{out}}$ such that $q_{k,t}=\gO(\frac{\xi}{|\sD|})$, considering $\eta=\epsilon=\gO(\frac{1}{\sqrt{T}})$ and $\xi=\gO(T^\frac{1}{4})$, SFG-MKL can achieve regret bound of $\gO(T^\frac{3}{4})$. Since the regret bound of $\gO(\sqrt{T})$ is more satisfactory than the regret bound of $\gO(T^\frac{3}{4})$, in what follows the structure of the feedback graph is refined at each time instant so that the regret of $\gO(\sqrt{T})$ can be achieved.

\section{Online Multi-Kernel Learning with Graph Refinement}
To begin with, let's define set $\sD_t^\prime$ as
\begin{align}
   \sD_t^\prime:=\left\{ i \bigg|\frac{u_{i,t}}{U_t} \ge \frac{1}{1-\xi}(\beta - \frac{\xi}{N})\right\} \label{eq:15}
\end{align}
where $\beta$ is a pre-selected constant. According to \eqref{eq:2}, it can be inferred that $p_{i,t} \ge \beta, \forall i \in \sD_t^\prime$. Let $\gG_t^\prime=(\gV,\gE_t^\prime)$ be a graph such that $\sD_t^\prime$ is a dominating set of $\gG_t^\prime$. Suppose at each time instant $t$, $\gG_t^\prime$ is employed as the feedback graph instead of $\gG$. In this case,  it is ensured that there is at least one edge from $\sD_t^\prime$ to each $v_i \in \gV \setminus \sD_t^\prime$, i.e., $\sD_t^\prime$ is a dominating set of $\gG_t^\prime$. In this case, we have $q_{i,t} \ge \beta$, $\forall v_i \in \gV$. To this end, at each time instant $t$,  $\gG_t^\prime$ can be constructed based on $\gG$ by expanding $\gE$ to $\gE_t^\prime$ such that $\sD_t^\prime$ would be a dominating set of $\gG_t^\prime$. Specifically, at each time instant $t$, the edge $(d_{i,t},i)$ is appended to $\gE_t^\prime$, if there is not any edge from $\sD_t^\prime$ to $v_i$, where 
\begin{align}
    d_{i,t} = \arg \min_{j \in \sD_t^\prime} \Delta ( \kappa_i(\boldsymbol{\rho}) , \kappa_j(\boldsymbol{\rho}) ). \label{eq:16}
\end{align}
Hence, there is at least one edge from $\sD_t^\prime$ to $v_i \in \gV \setminus \sD_t^\prime$, meaning $\sD_t^\prime$ is a dominating set for $\gG_t^\prime$. Then one of the vertices in $\gV$ is drawn according to the probability mass function (PMF) $p_t$, with
\begin{align}
	p_{i,t} = (1-\xi)\frac{u_{i,t}}{U_t} + \frac{\xi}{|\sD_t^\prime|}\1_\mathrm{i \in \sD_t^\prime}, i=1,\ldots,N. \label{eq:17}
\end{align}
Let $\sN_{i,t}^{\text{out}}$ and $\sN_{i,t}^{\text{in}}$ denote sets of out-neighbors and in-neighbors of $v_i$ in $\gG_t^\prime$, respectively. According to \eqref{eq:15} and \eqref{eq:17}, we have $q_{i,t} \ge \beta$, $\forall v_i\in \gV$ where $q_{i,t} = \sum_{j \in \sN_{i,t}^{\text{in}}}{p_{j,t}}$. The RF-based function approximation can be written as
\begin{align}
    \hat{f}_{\text{RF}}(\vx_t) = \sum_{i \in \sN_{I_t}^{\text{out}}}{\frac{w_{i,t}}{\sum_{j \in \sN_{I_t}^{\text{out}}}{w_{j,t}}}\hat{f}_{\text{RF},i}(\vx_t)}. \label{eq:19}   
\end{align} 
According to \eqref{eq:19}, $\vtheta_{i,t}$, $w_{i,t}$ and $u_{i,t}$ can be updated using \eqref{eq:7}, \eqref{eq:8a} and \eqref{eq:8b}, respectively. The procedure is summarized in \Algref{alg:2}, which is called SFG-MKL-R, and its performance is analyzed in the following Theorem. 
\begin{algorithm}[tb]
	\caption{SFG-MKL with Feedback Graph Refinement (SFG-MKL-R)}
	\label{alg:2}
	\begin{algorithmic}
		\STATE {\bfseries Input:}{Kernels $\kappa_{i}$, $i=1,...,N$, learning rate $\eta>0$, exploration rate $\xi>0$, the number of RFs $D$ and the constant $\beta>0$. }
		\STATE \textbf{Initialize:} $\vtheta_{i,1}=\mathbf{0}$, ${w}_{i,1}=1$, $i=1,...,N$, Construct the feedback graph $\gG$ via \Algref{alg:3}.
		\FOR{$t=1,...,T$}
		\STATE Receive one datum $\vx_{t}$.
		\STATE Set $\gE_t^\prime=\gE$ and obtain $d_{i,t}$, $\forall i \in \gV \setminus \sD_t^\prime$ by \eqref{eq:16}.
		\STATE For all $i \in \gV \setminus \sD_t^\prime$, append $(d_{i,t},i)$ to $\gE_t^\prime$ if $(d_{i,t},i) \notin \gE$.
		\STATE Draw one of nodes $v_i \in \gV$ according to the PMF ${p}_{t}=({p}_{1,t},...,{p}_{N,t})$ in \eqref{eq:17}.
		\STATE Predict $\hat{f}_{\text{RF}}(\vx_{t})$ via \eqref{eq:19}.
		\STATE Calculate loss ${\gL(\hat{f}_{\text{RF},i}(\vx_{t}),y_t)}$ for all $i \in \sS_t$.
		\STATE Update ${\vtheta}_{i,t+1}$ via \eqref{eq:7}.
		\STATE Update ${w}_{i,t+1}$ and $u_{i,t+1}$ via \eqref{eq:8}.
		\ENDFOR
	\end{algorithmic}
\end{algorithm}

\begin{theorem} \label{lem:4}
The stochastic regret of SFG-MKL-R satisfies
\begin{align}
    & \sum_{t=1}^{T}{\E_t[\gL(\hat{f}_{\text{RF}}(\vx_t),y_t)]} - \sum_{t=1}^{T}{\gL(f^*(\vx_t),y_t)} \nonumber \\ \le & \frac{2\ln N}{\eta} + \frac{(1+\epsilon) C^2}{2\eta} + \epsilon LTC + (\xi+\frac{\eta}{2}\frac{L^2+N\beta+1}{\beta}-\frac{\eta \xi}{2}) T \label{eq:18}
\end{align}
with probability at least $1-2^8(\frac{\sigma_{j^*}}{\epsilon})^2\exp(-\frac{D\epsilon^2}{4d+8})$ under (as1)-(as3) for any $\epsilon > 0$ and any $\beta \le \frac{1}{N}$.
\end{theorem}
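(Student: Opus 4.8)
The plan is to follow the same regret decomposition used in the proof of Theorem~\ref{th:1} (Appendix~\ref{ap:A}) and then exploit the single structural gain provided by the refinement step, namely that the per-round graph $\gG_t^\prime$ forces $q_{i,t}\ge\beta$ for \emph{every} node rather than only for nodes reachable cheaply. Writing $j^\ast$ for the best kernel in the hindsight as in \eqref{eq:10} and letting $\hat{f}_{\text{RF},j^\ast}$ use the running weight $\vtheta_{j^\ast,t}$ from \eqref{eq:7}, I would split the stochastic regret as
\begin{align}
    R_T = \underbrace{\sum_{t=1}^T\left(\E_t[\gL(\hat{f}_{\text{RF}}(\vx_t),y_t)]-\gL(\hat{f}_{\text{RF},j^\ast}(\vx_t),y_t)\right)}_{R_1} + \underbrace{\sum_{t=1}^T\left(\gL(\hat{f}_{\text{RF},j^\ast}(\vx_t),y_t)-\gL(f^\ast(\vx_t),y_t)\right)}_{R_2}, \nonumber
\end{align}
where $R_1$ captures node selection and in-neighborhood combination, and $R_2$ further decomposes into the online-gradient-descent regret of the per-kernel update and the random-feature approximation error.

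Next I would analyze $R_1$ through the two coupled multiplicative-weights layers. The node weights $u_{i,t}$ in \eqref{eq:8b} form an exponentially-weighted rule driven by the conditionally unbiased estimates $\hat\ell_{i,t}$ of \eqref{eq:14}; the standard inequalities $\exp(-\eta x)\le 1-\eta x+\tfrac{\eta^2}{2}x^2$ and $\ln(1+x)\le x$ yield a bound of the form $\tfrac{\ln N}{\eta}+\tfrac{\eta}{2}\sum_t\sum_i \tfrac{u_{i,t}}{U_t}\hat\ell_{i,t}^2$ against any fixed comparator node, while the exploration mass in \eqref{eq:17} contributes the additive $\xi T$ together with the $-\tfrac{\eta\xi}{2}T$ correction. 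The kernel weights $w_{i,t}$ in \eqref{eq:8a}, run over the selected out-neighborhood with the estimates $\ell_{i,t}$ of \eqref{eq:5}, give a second multiplicative-weights term; here bounding $\ln|\sN_{i,t}^{\text{out}}|\le\ln N$ is precisely what collapses the $\tfrac{\ln N|\sN_i^{\text{out}}|}{\eta}$ appearing in \eqref{eq:12} into the cleaner $\tfrac{2\ln N}{\eta}$ of \eqref{eq:18}. Within $R_2$, the gradient step \eqref{eq:7} is ordinary online gradient descent on the convex surrogate under (as1), contributing $\tfrac{C^2}{2\eta}$ plus a gradient second-moment term, with the norm bound $C_{\vtheta}^2$ transferred to the RF surrogate at the cost of the factor $(1+\epsilon)$.

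The decisive step is bounding the three second-moment sums, and this is exactly where the refinement enters. By the construction in \eqref{eq:16}, $\sD_t^\prime$ is a dominating set of $\gG_t^\prime$, and $p_{i,t}\ge\beta$ for $i\in\sD_t^\prime$ by \eqref{eq:15} and \eqref{eq:17}; hence every node has an in-neighbor of probability at least $\beta$, so $q_{i,t}=\sum_{j\in\sN_{i,t}^{\text{in}}}p_{j,t}\ge\beta$. Substituting $q_{i,t}\ge\beta$ gives $\tfrac{1}{\Bar q_{i,t}}=\sum_{j\in\sN_{i,t}^{\text{out}}}\tfrac{w_{j,t}}{q_{j,t}W_{i,t}}\le\tfrac1\beta$ for the kernel-combination variance and $\tfrac{L^2}{q_{j^\ast,t}}\le\tfrac{L^2}{\beta}$ for the gradient variance, whereas the node-selection variance supplies the $N$; collecting these under $\tfrac{\eta}{2}$ produces exactly the coefficient $\xi+\tfrac{\eta}{2}\tfrac{L^2+N\beta+1}{\beta}-\tfrac{\eta\xi}{2}$ in \eqref{eq:18}, with no residual $\sum_t$ as in the SFG-MKL bound. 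I would also verify that the hypothesis $\beta\le\tfrac1N$ makes $\sD_t^\prime$ nonempty, since $\max_i u_{i,t}/U_t\ge\tfrac1N\ge\tfrac{1}{1-\xi}(\beta-\tfrac{\xi}{N})$ whenever $\beta\le\tfrac1N$, which is needed for the refinement to be well-defined at every round.

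Finally, $R_2$'s approximation part is controlled by the uniform random-Fourier-feature bound for the best kernel $\kappa_{j^\ast}$ under (as3), as in \citet{Rahimi2007}: with probability at least $1-2^8(\sigma_{j^\ast}/\epsilon)^2\exp(-D\epsilon^2/(4d+8))$ the surrogate $\hat{f}_{\text{RF},j^\ast}$ approximates $f^\ast$ uniformly to accuracy $\epsilon$, which yields the $\epsilon LTC$ term and the $(1+\epsilon)$ factor on $C^2$. I expect the main obstacle to be the node-selection analysis rather than the variance bounds: because the loss feeding $\hat\ell_{i,t}$ is itself the combined loss $\gL(\hat{f}_{\text{RF}}(\vx_t),y_t)$, which depends on the drawn node $I_t$ through its refined out-neighborhood, one must carefully nest the two multiplicative-weights layers and confirm that all importance-sampling estimates remain conditionally unbiased, so that the telescoping and expectation arguments survive the per-round modification of the edge set $\gE_t^\prime$.
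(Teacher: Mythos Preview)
Your proposal is correct and follows essentially the same route as the paper: the paper's proof simply invokes the bound \eqref{eq:12} of Theorem~\ref{th:1} verbatim, then uses the refinement guarantee $q_{i,t}\ge\beta$ for all $i$ (with $\beta\le 1/N$ ensuring $\sD_t'\neq\emptyset$) together with $|\sN_i^{\text{out}}|\le N$ to collapse the $\bar q$-, $q_{j^\ast}$-, and $\ln N|\sN_i^{\text{out}}|$-terms into the stated constants. Your decomposition, variance accounting, and nonemptiness check match the paper exactly; the only extra care you take---flagging that the per-round edge set $\gE_t'$ varies and that the telescoping in Lemma~\ref{lem:3} must be re-verified---is a point the paper glosses over by directly citing Theorem~\ref{th:1}.
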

\begin{proof}
The proof can be found in Appendix \ref{ap:B}.
\end{proof}
According to Theorem \ref{lem:4}, by setting $\eta=\epsilon=\xi=\gO(\frac{1}{\sqrt{T}})$ and $\beta=\gO(1)$ such that $\beta \le \frac{1}{N}$, SFG-MKL-R can achieve regret bound of $\gO(\sqrt{T})$ using the feedback graph $\gG_t^\prime$. However, note that since some edges may be added to $\gG$ to construct $\gG_t^\prime$, using $\gG_t^\prime$ instead of $\gG$ may cause increase in computational complexity of function approximation.

\textbf{Computational Complexity.} Both SFG-MKL and SFG-MKL-R needs to store set of $d$-dimensional vectors $\{\boldsymbol{\psi}_{i,j}\}_{j=1}^{D}$ per kernel in addition to two weighting coefficients $\{w_{i,t}\}_{i=1}^N$ and $\{u_{i,t}\}_{i=1}^N$. Therefore, the memory requirement for both of the algorithms are $\gO(dDN)$. Let in order to construct $\gG$, the number of out-neighbors of each node $v_i \in \gV$ is restricted to $M<N$. In this case, the per-iteration complexity of SFG-MKL including calculation of inner products is $\gO(dDM)$. However, in SFG-MKL-R, due to the graph refinement procedure it is possible that the number of out-neighbors of one node in $\gG_t^\prime$ to be $N$. Therefore, the worst case per-iteration computational complexity of SFG-MKL-R is $\gO(dDN)$. Furthermore, the per-iteration computational complexity of OMKR developed by \citet{Sahoo2014} is $\gO(tdN)$ while the per-iteration computational complexity of RF-based online multi-kernel learning algorithms developed by \citet{Sahoo2019} and \citet{Shen2019} are $\gO(dDN)$. In addition, the per-iteration complexity of OMKL-GF \citep{Ghari2020} is $\gO(dDM+JN)$ where $M$ is the maximum number of kernels in the chosen subset and $J$ is a constant related to the algorithm.


\textbf{Comparison with OMKL-GF \citep{Ghari2020}.} OMKL-GF by \citet{Ghari2020} also views kernels as experts.
OMKL-GF exploits a bipartite graph without considering the relationships between kernels. In contrast, our novel SFG-MKL and SFG-MKL-R construct feedback graphs based on  similarities between kernels. This feedback graph structure helps SFG-MKL-R to achieve tighter regret of $\gO(\sqrt{T})$ compared with  $\gO(T^\frac{5}{6})$ of OMKL-GF.

\section{Experiments}
This section presents experimental results over real datasets downloaded from UCI Machine Learning Repository \citep{Dua2019}. The performance of the proposed algorithms SFG-MKL and SFG-MKL-R are compared with the following kernel learning benchmarks:\\
\textbf{OMKR}: online multi-kernel regression approach proposed by \citet{Sahoo2014}.\\
 \textbf{Raker}: RF-based online multi-kernel learning \citep{Shen2019}.\\
\textbf{OMKL-GF}: online multi-kernel learning with bipartite graph \citep{Ghari2020}.


\begin{table}[t]
\caption{MSE $\&$ run time for Airfoil Self-Noise dataset.}
\label{table:1}
\begin{center}
\begin{tabular}{lll}
\toprule
\multicolumn{1}{c}{\bf Algorithms}  &\multicolumn{1}{c}{\bf MSE($\times 10^{-3}$)}  &\multicolumn{1}{c}{\bf Run time (s)}
\\ \hline \\
OMKR    &$28.32$  &$1003.80$ \\
Raker   &$22.85$ &$4.86$ \\
OMKL-GF &$30.41$ &$2.71$\\
SFG-MKL    &$12.83$ &$\mathbf{1.32}$\\
SFG-MKL-R  &$\mathbf{12.81}$ &$1.91$\\
\bottomrule
\end{tabular}
\end{center}
\end{table}

\begin{table}[t]
\caption{MSE and run time  for Concrete Compressive Strength dataset.}
\label{table:2}
\begin{center}
\begin{tabular}{lll}
\toprule
\multicolumn{1}{c}{\bf Algorithms}  &\multicolumn{1}{c}{\bf MSE($\times 10^{-3}$)}  &\multicolumn{1}{c}{\bf Run time (s)}
\\ \hline \\
OMKR    &$30.51$  &$454.87$ \\
Raker   &$26.02$ &$3.32$ \\
OMKL-GF &$40.69$ &$2.08$\\
SFG-MKL    &$21.56$ &$\mathbf{0.92}$\\
SFG-MKL-R  &$\mathbf{21.38}$ &$1.43$\\
\bottomrule
\end{tabular}
\end{center}
\end{table}

\begin{table}[t]
\caption{MSE and run time for Naval Propulsion Plants dataset.}
\label{table:3}
\begin{center}
\begin{tabular}{lll}
\toprule
\multicolumn{1}{c}{\bf Algorithms}  &\multicolumn{1}{c}{\bf MSE($\times 10^{-3}$)}  &\multicolumn{1}{c}{\bf Run time (s)}
\\ \hline \\
OMKR    &$7.59$  &$39308.53$ \\
Raker   &$6.82$ &$38.52$ \\
OMKL-GF &$10.50$ &$17.92$\\
SFG-MKL    &$\mathbf{4.35}$ &$\mathbf{8.99}$\\
SFG-MKL-R  &$4.36$ &$10.35$\\
\bottomrule
\end{tabular}
\end{center}
\end{table}

\begin{table}[t]
\caption{MSE and run time  over Wine Quality dataset.}
\label{table:4}
\begin{center}
\begin{tabular}{lll}
\toprule
\multicolumn{1}{c}{\bf Algorithms}  &\multicolumn{1}{c}{\bf MSE($\times 10^{-3}$)}  &\multicolumn{1}{c}{\bf Run time (s)}
\\ \hline \\
OMKR    &$21.95$  &$5342.42$ \\
Raker   &$21.04$ &$16.34$ \\
OMKL-GF &$23.95$ &$8.60$\\
SFG-MKL    &$\mathbf{20.19}$ &$\mathbf{4.01}$\\
SFG-MKL-R  &$20.20$ &$4.96$\\
\bottomrule
\end{tabular}
\end{center}
\end{table}


The accuracy of different approaches are evaluated using mean square error (MSE). Due to the randomness in the random features extracted for function approximation, we average the MSE over $R = 50$ different sets of random features. The MSE at time $t$ can be computed as
$
    \text{MSE}_t = \frac{1}{R}\sum_{r=1}^{R}{ \frac{1}{t}\sum_{\tau=1}^{t}{(\hat{f}_{\text{RF}}(\vx_{\tau})-y_\tau)^2} }.  $
The number of random features $D=50$. The kernel dictionary contains $41$ radial basis function (RBF) kernels. The bandwidth of the $i$-th kernel is as $10^{\sigma_i}$ where $\sigma_i=\frac{i-21}{10}$. 
Parameters $\xi$ and  $\eta$ are set to $\frac{1}{\sqrt{T}}$ for all algorithms. The performance of kernel learning algorithms is evaluated through several real datasets: \\
\textbf{Airfoil Self-Noise}:  comprises $1503$ different size airfoils at various wind tunnel speeds and each data sample $\vx_t$ includes $5$ features such as frequency and chord length. The output $y_t$ is scaled sound pressure level in decibels \citep{Brooks1989}.\\
\textbf{Concrete Compressive Strength}:  contains $1030$ samples of $8$ features, such as the amount of cement or water in a concrete. The goal is to predict concrete compressive strength \citep{Yeh1998}.\\
\textbf{Naval Propulsion Plants }:  contains $11934$ samples of $15$ features of a naval vessel characterized by a gas turbine propulsion plant including the ship speed and gas turbine shaft torque. The goal is to predict the lever position \citep{Coraddu2016}.\\
\textbf{Wine Quality}: contains $4898$ samples of $11$ features for white wine including the amount of acidity, sugar and alcohol. The goal is to predict the quality of wine \citep{Cortez2009}. 

Parameter $\lambda$ is set to $10^{-3}$ and the greedy set cover algorithm by \citet{Chvatal1979} is employed to find the dominating set $\sD$ of the feedback graph $\gG$. In addition, to determine the value of $\gamma_i$ for each vertex $v_i \in \gV$, the number of out-neighbors for each node is set to be $5$.
For SFG-MKL-R, at each time,  $\beta$ is set to $\beta = (1-\xi)\Bar{u}_{[10,t]}+\frac{\xi}{N}$ where $\Bar{u}_{[10,t]}$ denote the tenth greatest value in the sequence $\{\frac{u_{i,t}}{U_t}\}_{i=1}^{N}$. In order to speed up SFG-MKL and SFG-MKL-R, after $300$ time instants, both algorithms choose $I_t=\arg\max_{i}{\frac{u_{i,t}}{U_t}}$. For OMKL-GF, the number of selective nodes is set to be $1$ and the maximum number of kernels chosen at each time instant is $10$, which obtains the best results as shown by \citet{Ghari2020}. All experiments were carried out using Intel(R) Core(TM) i7-10510U CPU @ 1.80 GHz 2.30 GHz processor with a 64-bit {Windows} operating system.

Tables \ref{table:1}, \ref{table:2}, \ref{table:3} and \ref{table:4} show MSE and run time, respectively for kernel learning algorithms when the performance of the algorithms tested on Airfoil Self-Noise, Concrete Compressive Strength, Naval Propulsion Plants and Wine Quality datasets, respectively. As it can be seen from Tables \ref{table:1}, \ref{table:2}, \ref{table:3} and \ref{table:4}, both SFG-MKL and SFG-MKL-R can provide lower MSE in comparison with other algorithms. Also, results in Tables \ref{table:1}, \ref{table:2}, \ref{table:3} and \ref{table:4} illustrate that SFG-MKL runs faster than other kernel learning approaches significantly. From the results in Tables \ref{table:1}, \ref{table:2}, \ref{table:3} and \ref{table:4}, it can be inferred that using the feedback graph $\gG$, SFG-MKL can effectively prune the irrelevant kernels to improve the accuracy of the learning task. Also, run times reported in Tables \ref{table:1}, \ref{table:2}, \ref{table:3} and \ref{table:4} show that choosing a subset of kernels instead of employing all kernels in the given dictionary can reduce the computational complexity of kernel learning since both SFG-MKL and SFG-MKL-R run faster than Raker. 
Also, SFG-MKL is faster than OMKL-GF which shows the effectiveness of SFG-MKL in reduction of computational complexity. 
Moreover, Figures \ref{fig:1} and \ref{fig:2} illustrate the MSE of MKL algorithms over time for Airfoil Self Noise and Naval Propulsion plants datasets, respectively. It can be observed that our proposed algorithms converge faster than all other alternatives.
\begin{figure}[t]
	\centering
	\includegraphics[width=.75\linewidth]{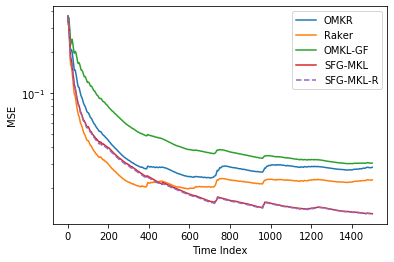}
	\caption{MSE performance on Airfoil Self Noise dataset.}
	\label{fig:1}
\end{figure}
\begin{figure}
	\centering
	\includegraphics[width=.75\linewidth]{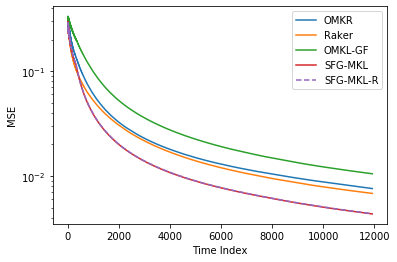}
	\caption{MSE performance on Naval Propulsion Plants dataset.}
	\label{fig:2}
\end{figure}

\section{Conclusion}
This paper presented online multi-kernel learning algorithms for non-linear function learning. Based on the similarities between kernels, a feedback graph was constructed in order to choose a subset of kernels. It was proved that SFG-MKL can achieve sub-linear regret of $\gO(T^\frac{3}{4})$. Furthermore, refining the feedback graph structure at each time instant, SFG-MKL-R was proposed, which enjoys  sub-linear regret of $\gO(\sqrt{T})$. Moreover, experiments on real datasets showed that by choosing a subset of kernels, both SFG-MKL and SFG-MKL-R can obtain lower MSE in comparison with other online multi-kernel learning algorithms OMKR, 
Raker and OMKL-GF. Furthermore, experiments showed that SFG-MKL has considerably lower run time compared to OMKR, 
Raker and OMKL-GF. 

\bibliographystyle{plainnat}
\bibliography{References}

\begin{thebibliography}{21}
\providecommand{\natexlab}[1]{#1}
\providecommand{\url}[1]{\texttt{#1}}
\expandafter\ifx\csname urlstyle\endcsname\relax
  \providecommand{\doi}[1]{doi: #1}\else
  \providecommand{\doi}{doi: \begingroup \urlstyle{rm}\Url}\fi

\bibitem[Bengio et~al.(2006)Bengio, Delalleau, and Roux]{Bengio2006}
Yoshua Bengio, Olivier Delalleau, and Nicolas~L. Roux.
\newblock The curse of highly variable functions for local kernel machines.
\newblock In \emph{Advances in Neural Information Processing Systems}, pages
  107--114, May 2006.

\bibitem[Bouboulis et~al.(2018)Bouboulis, Chouvardas, and
  Theodoridis]{Bouboulis2017}
Pantelis Bouboulis, Symeon Chouvardas, and Sergios Theodoridis.
\newblock Online distributed learning over networks in rkh spaces using random
  fourier features.
\newblock \emph{IEEE Transactions on Signal Processing}, 66\penalty0
  (7):\penalty0 1920--1932, Apr 2018.

\bibitem[Brooks et~al.(1989)Brooks, Pope, and Marcolini]{Brooks1989}
Thomas~F. Brooks, D.~Stuart Pope, and Micheal~A. Marcolini.
\newblock Airfoil self-noise and prediction.
\newblock Technical report, Jul 1989.

\bibitem[Chung et~al.(2019)Chung, Nath, Joseph, and White]{Chung2019}
Wesley Chung, Somjit Nath, Ajin Joseph, and Martha White.
\newblock Two-timescale networks for nonlinear value function approximation.
\newblock In \emph{International Conference on Learning Representations}, May
  2019.

\bibitem[Chvatal(1979)]{Chvatal1979}
Vasek Chvatal.
\newblock A greedy heuristic for the set-covering problem.
\newblock \emph{Mathematics of Operations Research}, 4\penalty0 (3):\penalty0
  233--235, Aug 1979.

\bibitem[Coraddu et~al.(2016)Coraddu, Oneto, Ghio, Savio, Anguita, and
  Figari]{Coraddu2016}
Andrea Coraddu, Luca Oneto, Aessandro Ghio, Stefano Savio, Davide Anguita, and
  Massimo Figari.
\newblock Machine learning approaches for improving condition-based maintenance
  of naval propulsion plants.
\newblock \emph{Proceedings of the Institution of Mechanical Engineers, Part M:
  Journal of Engineering for the Maritime Environment}, 230\penalty0
  (1):\penalty0 136--153, 2016.

\bibitem[Cortez et~al.(2009)Cortez, Cerdeira, Almeida, Matos, and
  Reis]{Cortez2009}
Paulo Cortez, António Cerdeira, Fernando Almeida, Telmo Matos, and José Reis.
\newblock Modeling wine preferences by data mining from physicochemical
  properties.
\newblock \emph{Decision Support Systems}, 47\penalty0 (4):\penalty0 547 --
  553, 2009.

\bibitem[Dua and Graff(2017)]{Dua2019}
Dheeru Dua and Casey Graff.
\newblock {UCI} machine learning repository, 2017.

\bibitem[Ghari and Shen(2020)]{Ghari2020}
Pouya~M Ghari and Yanning Shen.
\newblock Online multi-kernel learning with graph-structured feedback.
\newblock In \emph{Proceedings of the International Conference on Machine
  Learning}, volume 119, pages 3474--3483, Jul 2020.

\bibitem[Hazan(2016)]{Hazan2016}
Elad Hazan.
\newblock Introduction to online convex optimization.
\newblock \emph{Foundations and Trends in Optimization}, 2\penalty0
  (3-4):\penalty0 157--325, 2016.

\bibitem[Lu et~al.(2016)Lu, Hoi, Wang, Zhao, and Liu]{Lu2016}
Jing Lu, Steven~C.H. Hoi, Jialei Wang, Peilin Zhao, and Zhi-Yong Liu.
\newblock Large scale online kernel learning.
\newblock \emph{Journal of Machine Learning Research}, 17\penalty0
  (47):\penalty0 1--43, 2016.

\bibitem[Rahimi and Recht(2007)]{Rahimi2007}
Ali Rahimi and Benjamin Recht.
\newblock Random features for large-scale kernel machines.
\newblock In \emph{Proceedings of International Conference on Neural
  Information Processing Systems}, pages 1177--1184, Dec 2007.

\bibitem[Sahoo et~al.(2014)Sahoo, Hoi, and Li]{Sahoo2014}
Doyen Sahoo, Steven~C.H. Hoi, and Bin Li.
\newblock Online multiple kernel regression.
\newblock In \emph{Proceedings of ACM SIGKDD International Conference on
  Knowledge Discovery and Data Mining}, page 293–302, 2014.

\bibitem[Sahoo et~al.(2019)Sahoo, Hoi, and Li]{Sahoo2019}
Doyen Sahoo, Steven C.~H. Hoi, and Bin Li.
\newblock Large scale online multiple kernel regression with application to
  time-series prediction.
\newblock \emph{ACM Transactions on Knowledge Discovery from Data}, 13\penalty0
  (1), Jan 2019.

\bibitem[Scholkopf and Smola(2001)]{Scholkopf2001}
Bernhard Scholkopf and Alexander~J. Smola.
\newblock \emph{Learning with Kernels: Support Vector Machines, Regularization,
  Optimization, and Beyond}.
\newblock MIT Press, Cambridge, MA, USA, 2001.

\bibitem[Shen et~al.(2019)Shen, Chen, and Giannakis]{Shen2019}
Yanning Shen, Tianyi Chen, and Georgios~B. Giannakis.
\newblock Random feature-based online multi-kernel learning in environments
  with unknown dynamics.
\newblock \emph{Journal of Machine Learning Research}, 20\penalty0
  (1):\penalty0 773--808, Jan 2019.

\bibitem[Sonnenburg et~al.(2006)Sonnenburg, R\"{a}tsch, Sch\"{a}fer, and
  Sch\"{o}lkopf]{Sonnenburg2006}
S\"{o}ren Sonnenburg, Gunnar R\"{a}tsch, Christin Sch\"{a}fer, and Bernhard
  Sch\"{o}lkopf.
\newblock Large scale multiple kernel learning.
\newblock \emph{Journal of Machine Learning Research}, 7:\penalty0 1531–1565,
  Dec 2006.

\bibitem[Wahba(1990)]{Wahba1990}
Grace Wahba.
\newblock \emph{Spline Models for Observational Data}.
\newblock Society for Industrial and Applied Mathematics, 1990.

\bibitem[Williams and Seeger(2000)]{Williams2000}
Christopher K.~I. Williams and Matthias Seeger.
\newblock Using the nystr\"{o}m method to speed up kernel machines.
\newblock In \emph{Proceedings of the International Conference on Neural
  Information Processing Systems}, page 661–667, Jan 2000.

\bibitem[Yeh(1998)]{Yeh1998}
I-Cheng Yeh.
\newblock Modeling of strength of high-performance concrete using artificial
  neural networks.
\newblock \emph{Cement and Concrete Research}, 28\penalty0 (12):\penalty0 1797
  -- 1808, Dec 1998.

\bibitem[Zhang and Liao(2019)]{Zhang2019}
Xiao Zhang and Shizhong Liao.
\newblock Incremental randomized sketching for online kernel learning.
\newblock In \emph{Proceedings of International Conference on Machine
  Learning}, pages 7394--7403, Jun 2019.

\end{thebibliography}

\appendix

\section{Proof of Lemma \ref{lem:6}} \label{ap:C}
According to \eqref{eq:23}, we obtain
\begin{align}
    \frac{1}{\gU_d}\int |\hat{f}_i(\vx) - \hat{f}_j(\vx)|^2 d\vx  = \frac{1}{\gU_d}\int |\sum_{t=1}^T{\alpha_{i,t}\kappa_i(\vx,\vx_t)} - \sum_{t=1}^T{\alpha_{j,t}\kappa_j(\vx,\vx_t)}|^2 d\vx. \label{eq:a51}
\end{align}
Applying Arithmetic Mean-Geometric Mean inequality on the right hand side of \eqref{eq:a51}, we find
\begin{align}
    & \frac{1}{\gU_d}\int |\hat{f}_i(\vx) - \hat{f}_j(\vx)|^2 d\vx \nonumber \\ \le & \frac{2}{\gU_d} \int \left(|\sum_{t=1}^T{\alpha_{i,t}\left(\kappa_i(\vx,\vx_t)-\kappa_j(\vx,\vx_t)\right)}|^2 + |\sum_{t=1}^T{(\alpha_{j,t}-\alpha_{i,t})\kappa_j(\vx,\vx_t)}|^2\right) d\vx. \label{eq:a52}
\end{align}
Using Cauchy-Schwartz inequality, \eqref{eq:a52} can be further relaxed to
\begin{align}
    & \frac{1}{\gU_d}\int |\hat{f}_i(\vx) - \hat{f}_j(\vx)|^2 d\vx \nonumber \\ \le & \frac{2}{\gU_d} \int (\sum_{t=1}^T{|\alpha_{i,t}|^2})(\sum_{t=1}^T{|\kappa_i(\vx,\vx_t)-\kappa_j(\vx,\vx_t)|^2})d\vx \nonumber \\ & + \frac{2}{\gU_d} \int (\sum_{t=1}^T{|\alpha_{j,t} - \alpha_{i,t}|^2})(\sum_{t=1}^T{|\kappa_j(\vx,\vx_t)|^2})d\vx. \label{eq:a53}
\end{align}
Considering the fact that $C_m:=\max_i \sum_{t=1}^{T}{|\alpha_{i,t}|^2}$, from \eqref{eq:a53} it can be written that
\begin{align}
    & \frac{1}{\gU_d}\int |\hat{f}_i(\vx) - \hat{f}_j(\vx)|^2 d\vx \nonumber \\ \le & \frac{2C_m}{\gU_d}\sum_{t=1}^T{\int |\kappa_i(\vx,\vx_t)-\kappa_j(\vx,\vx_t)|^2 d\vx} + \frac{4C_m}{\gU_d}\sum_{t=1}^T{\int |\kappa_j(\vx,\vx_t)|^2 d\vx}. \label{eq:a54}
\end{align}
Furthermore, based on \eqref{eq:a54} and the fact that $|\kappa_j(\vx,\vx_t)|^2 \le 1$, we can infer that
\begin{align}
    \frac{1}{\gU_d}\int |\hat{f}_i(\vx) - \hat{f}_j(\vx)|^2 d\vx \le \frac{2C}{\gU_d} \sum_{t=1}^T{\left(\Delta(\kappa_i(\vx-\vx_t),\kappa_j(\vx-\vx_t))+2\gU_d\right)} \label{eq:a55}
\end{align}
which proves Lemma \ref{lem:6}.

\section{Proof of Theorem \ref{th:1}} \label{ap:A}
In order to prove Theorem \ref{th:1}, we prove the following intermediate Lemma.
\begin{lemma} \label{lem:1}
Let $\hat{f}_{\text{RF},i}(.)$ denote the sequence of estimates generated by SFG-MKL with a preselected kernel ${\kappa}_{i}$ where $\gF_i = \{\hat{f}_{i}| \hat{f}_{i}(\vx)=\vtheta^\top\rvz_i(\vx), \forall \vtheta \in \R^{2D}\}$. Then, under assumptions (as1) and (as2) the following bound holds
\begin{align}
    \sum_{t=1}^{T}{\gL(\hat{f}_{\text{RF},i}(\vx_t),y_t)} - \sum_{t=1}^{T}{\gL(\hat{f}_{i}^{*}(\vx_t),y_t)} \le \frac{\| \vtheta_i^* \|^2}{2\eta} + \sum_{t=1}^{T}{\frac{\eta L^2}{2 q_{i,t}}} \label{eq:a1}
\end{align}
where $L$ is the Lipschitz constant in (as2) and $\vtheta_i^*$ is the parameter vector associated with the best estimator $\hat{f}_{i}^{*}(\vx) = (\vtheta_{i}^{*})^{\top}\rvz_{i}(\vx)$.
\end{lemma}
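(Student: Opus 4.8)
The statement is an online–convex–optimization regret bound for the per‑kernel parameter sequence $\vtheta_{i,t}$, so the plan is to run the standard online gradient descent (OGD) analysis on the importance‑weighted surrogate loss $\ell_{i,t}$ of \eqref{eq:5} — whose gradient is exactly the quantity driving the update \eqref{eq:7} — and then pass back to the true loss $\gL$ using unbiasedness of the importance‑sampling estimate. Throughout, $\vtheta_i^*$ is the nonrandom hindsight comparator, whereas the iterate $\vtheta_{i,t}$ and the inclusion probability $q_{i,t}=\sum_{j\in\sN_i^{\text{in}}}p_{j,t}$ are both measurable with respect to the history through time $t$.

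First I would carry out the OGD algebra. From \eqref{eq:7} we have $\vtheta_{i,t+1}=\vtheta_{i,t}-\eta\nabla\ell_{i,t}$, so expanding $\|\vtheta_{i,t+1}-\vtheta_i^*\|^2$ and rearranging gives
\begin{align}
\langle \nabla\ell_{i,t},\,\vtheta_{i,t}-\vtheta_i^*\rangle = \frac{\|\vtheta_{i,t}-\vtheta_i^*\|^2-\|\vtheta_{i,t+1}-\vtheta_i^*\|^2}{2\eta}+\frac{\eta}{2}\|\nabla\ell_{i,t}\|^2. \nonumber
\end{align}
Since $\ell_{i,t}$ is a nonnegative scalar multiple of $\gL$ (the scalar $\1_{i\in\sS_t}/q_{i,t}$ being constant in $\vtheta$), (as1) makes it convex in $\vtheta$, so the left side bounds $\ell_{i,t}(\vtheta_{i,t})-\ell_{i,t}(\vtheta_i^*)$ from above. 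Summing over $t$ telescopes the first term on the right, and with $\vtheta_{i,1}=\mathbf{0}$ this collapses to $\|\vtheta_i^*\|^2/(2\eta)$. For the gradient term I would combine (as2) with the explicit form of $\nabla\ell_{i,t}$ in \eqref{eq:7} to get $\|\nabla\ell_{i,t}\|^2\le \frac{L^2}{q_{i,t}^2}\1_{i\in\sS_t}$.

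The crux is converting the surrogate bound into the true‑loss bound \eqref{eq:a1}. Taking the conditional expectation $\E_t$ and using $\E_t[\1_{i\in\sS_t}]=q_{i,t}$ from \eqref{eq:6}, the estimate is unbiased, $\E_t[\ell_{i,t}(\vtheta)]=\gL(\vtheta^\top\rvz_i(\vx_t),y_t)$ for any history‑measurable $\vtheta$; hence the left side returns $\gL(\hat f_{\text{RF},i}(\vx_t),y_t)-\gL(\hat f_i^*(\vx_t),y_t)$ in expectation. Simultaneously $\E_t[\1_{i\in\sS_t}/q_{i,t}^2]=1/q_{i,t}$, which collapses the $q_{i,t}^{-2}$ in the gradient bound to the single power $q_{i,t}^{-1}$ on the right of \eqref{eq:a1}. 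Assembling these through the tower property and summing over $t$ gives the claim. I expect this expectation bookkeeping — confirming unbiasedness at the random‑but‑predictable iterate $\vtheta_{i,t}$ and the second‑moment identity that turns $q_{i,t}^{-2}$ into $q_{i,t}^{-1}$ — to be the only nonroutine step, the OGD inequality itself being mechanical.
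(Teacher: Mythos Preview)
Your proposal is correct and follows essentially the same route as the paper: expand $\|\vtheta_{i,t+1}-\vtheta_i^*\|^2$ from the update \eqref{eq:7}, invoke convexity (as1) to pass to the surrogate loss difference, telescope, bound the gradient norm via (as2), and then use the conditional moments $\E_t[\1_{i\in\sS_t}]=q_{i,t}$ and $\E_t[\1_{i\in\sS_t}/q_{i,t}^2]=1/q_{i,t}$ to convert the importance‑weighted bound into \eqref{eq:a1}. The only cosmetic difference is ordering: the paper takes $\E_t$ per step and then sums, whereas you sum first and invoke the tower property; both are equivalent and your identification of the unbiasedness and second‑moment steps as the only nonmechanical ingredients is exactly right.
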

\begin{proof}
For $\vtheta_{i,t+1}$ and any fixed $\vtheta$, it can be written that
\begin{align}
    \| \vtheta_{i,t+1} - \vtheta \|^2 & = \| \vtheta_{i,t} - {\eta}{\nabla}\ell_{i,t}  - \mathbold{\theta} \|^2 \nonumber \\ & = \| \mathbold{\theta}_{i,t} - \mathbold{\theta} \|^2 - 2{\eta}{\nabla}^{\top}\ell_{i,t}(\vtheta_{i,t} - \vtheta) + \| {\eta}{\nabla}\ell_{i,t} \|^2. \label{eq:a2}
\end{align}
Furthermore, from the convexity of the loss function with respect to $\vtheta$ in (as1), we can conclude that
\begin{align}
    \gL(\vtheta_{i,t}^\top \rvz_i(\vx_t),y_t) - \gL(\vtheta^\top \rvz_i(\vx_t),y_t) \le \nabla^\top \gL(\vtheta_{i,t}^\top \rvz_i(\vx_t),y_t)(\vtheta_{i,t}-\vtheta). \label{eq:a3}
\end{align}
Therefore, from \eqref{eq:a3}, it can be inferred that
\begin{align}
    & \left(\frac{\gL(\vtheta_{i,t}^\top \rvz_i(\vx_t),y_t)}{q_{i,t}} - \frac{\gL(\vtheta^\top \rvz_i(\vx_t),y_t)}{q_{i,t}}\right)\1_\mathrm{i \in \sS_t} \nonumber \\ \le & \frac{\nabla^\top\gL(\vtheta_{i,t}^\top \rvz_i(\vx_t),y_t)}{q_{i,t}}(\vtheta_{i,t}-\vtheta)\1_\mathrm{i \in \sS_t}. \label{eq:a4}
\end{align}
Based on \eqref{eq:5}, \eqref{eq:a4} is equivalent to
\begin{align}
    \ell_{i,t} - \frac{\gL(\vtheta^\top \rvz_i(\vx_t),y_t)}{q_{i,t}}\1_\mathrm{i \in \sS_t} \le \nabla^\top \ell_{i,t}(\vtheta_{i,t}-\vtheta). \label{eq:a5}
\end{align}
Combining \eqref{eq:a2} with \eqref{eq:a5}, we get
\begin{align}
    \ell_{i,t} - \frac{\gL(\vtheta^\top \rvz_i(\vx_t),y_t)}{q_{i,t}}\1_\mathrm{i \in \sS_t} \le \frac{\| \mathbold{\theta}_{i,t} - \mathbold{\theta} \|^2 - \| \vtheta_{i,t+1} - \vtheta \|^2}{2\eta} + \frac{\eta}{2}\| {\nabla}\ell_{i,t} \|^2. \label{eq:a6}
\end{align}
Taking the expectation of $\ell_{i,t}$ and $\| {\nabla}\ell_{i,t} \|^2$ with respect to $\1_\mathrm{i \in \sS_t}$, we arrive at
\begin{subequations} \label{eq:a7}
\begin{align}
    \E_t[\ell_{i,t}] & = \sum_{j \in \sN_{i}^{\text{in}}}{p_{j,t}\frac{\gL(\vtheta_{i,t}^\top \rvz_i(\vx_t),y_t)}{q_{i,t}}} = \gL(\vtheta_{i,t}^\top \rvz_i(\vx_t),y_t) \label{eq:a7a} \\
    \E_t[\| {\nabla}\ell_{i,t} \|^2] & = \sum_{j \in \sN_{i}^{\text{in}}}{p_{j,t}\frac{\| \nabla \gL(\vtheta_{i,t}^\top \rvz_i(\vx_t),y_t)\|^2}{q_{i,t}^2}} = \frac{\| \nabla \gL(\vtheta_{i,t}^\top \rvz_i(\vx_t),y_t)\|^2}{q_{i,t}}. \label{eq:a7b}
\end{align}
\end{subequations}
Therefore, taking the expectation with respect to $\1_\mathrm{i \in \sS_t}$ from both sides of \eqref{eq:a6}, we obtain
\begin{align}
    & \gL(\vtheta_{i,t}^\top \rvz_i(\vx_t),y_t) - \gL(\vtheta^\top \rvz_i(\vx_t),y_t) \nonumber \\ \le & \frac{\| \mathbold{\theta}_{i,t} - \mathbold{\theta} \|^2 - \| \vtheta_{i,t+1} - \vtheta \|^2}{2\eta} + \frac{\eta \| \nabla \gL(\vtheta_{i,t}^\top \rvz_i(\vx_t),y_t)\|^2}{2 q_{i,t}}. \label{eq:a8}
\end{align}
Based on (as2), we have $\| \nabla \gL(\vtheta_{i,t}^\top \rvz_i(\vx_t),y_t)\|^2 \le L^2$. Therefore, summing \eqref{eq:a8} over time from $t=1$ to $t=T$ it can be concluded that
\begin{align}
    & \sum_{t=1}^{T}{\gL(\vtheta_{i,t}^\top \rvz_i(\vx_t),y_t)} - \sum_{t=1}^{T}{\gL(\vtheta^\top \rvz_i(\vx_t),y_t)} \nonumber \\ \le & \frac{\| \mathbold{\theta} \|^2 - \| \vtheta_{i,T+1} - \vtheta \|^2}{2\eta} + \sum_{t=1}^{T}{\frac{\eta L^2}{2 q_{i,t}}}. \label{eq:a9}
\end{align}
Putting $\vtheta = \vtheta_i^*$ in \eqref{eq:a9} and taking into account that $\| \vtheta_{i,T+1} - \vtheta \|^2 \ge 0$, we can write
\begin{align}
    \sum_{t=1}^{T}{\gL(\vtheta_{i,t}^\top \rvz_i(\vx_t),y_t)} - \sum_{t=1}^{T}{\gL(\vtheta^\top \rvz_i(\vx_t),y_t)} \le \frac{\| \vtheta_i^* \|^2}{2\eta} + \sum_{t=1}^{T}{\frac{\eta L^2}{2 q_{i,t}}} \label{eq:a10}
\end{align}
which completes the proof of Lemma \ref{lem:1}.
\end{proof}
Furthermore, in order to prove Theorem \ref{th:1}, the following intermediate Lemma is also proved.
\begin{lemma} \label{lem:2}
The following inequality holds under (as1) and (as2)
\begin{align}
    \sum_{t=1}^{T}{\E_t[\gL(\hat{f}_\text{RF}(\vx_t),y_t)]} - \sum_{t=1}^{T}{\gL_i(\hat{f}_{\text{RF}}(\vx_t),y_t)} \le \frac{\ln N}{\eta} + \eta(1+\frac{N}{2}-\frac{\eta}{2}) T \label{eq:a11}
\end{align}
where $\gL_i(\hat{f}_{\text{RF}}(\vx_t),y_t)$ denote the loss of function approximation when $v_i$ is drawn.
\end{lemma}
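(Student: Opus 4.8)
The plan is to recognize Lemma~\ref{lem:2} as a standard multiplicative-weights (Hedge/EXP3-type) regret bound for the \emph{node-selection} subproblem, in which the ``experts'' are the vertices $v_i\in\gV$, the loss incurred when node $v_i$ is drawn is $\gL_i(\hat{f}_{\text{RF}}(\vx_t),y_t)$, and the vertex weights $\{u_{i,t}\}$ are updated multiplicatively via \eqref{eq:8b} using the importance-sampling estimates $\hat{\ell}_{i,t}$ of \eqref{eq:14}. The whole argument is driven by tracking the potential $U_t:=\sum_{i=1}^N u_{i,t}$, starting from $U_1=N$.

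First I would bound the one-step potential ratio. Writing $\tilde{p}_{j,t}:=u_{j,t}/U_t$ and using \eqref{eq:8b} together with $e^{-x}\le 1-x+x^2/2$ for $x\ge 0$,
\begin{align}
\frac{U_{t+1}}{U_t}=\sum_{j=1}^N \tilde{p}_{j,t}\exp(-\eta\hat{\ell}_{j,t})\le 1-\eta\sum_{j=1}^N\tilde{p}_{j,t}\hat{\ell}_{j,t}+\frac{\eta^2}{2}\sum_{j=1}^N\tilde{p}_{j,t}\hat{\ell}_{j,t}^2. \nonumber
\end{align}
Taking logarithms with $\ln(1+z)\le z$, summing over $t=1,\dots,T$ to telescope, and lower-bounding $\ln U_{T+1}\ge\ln u_{i,T+1}=-\eta\sum_t\hat{\ell}_{i,t}$ to isolate the comparator node $v_i$, I obtain after dividing by $\eta$
\begin{align}
\sum_{t=1}^T\sum_{j=1}^N\tilde{p}_{j,t}\hat{\ell}_{j,t}-\sum_{t=1}^T\hat{\ell}_{i,t}\le\frac{\ln N}{\eta}+\frac{\eta}{2}\sum_{t=1}^T\sum_{j=1}^N\tilde{p}_{j,t}\hat{\ell}_{j,t}^2. \nonumber
\end{align}

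Next I would pass to conditional expectations $\E_t$. The key observation is that when $I_t=j$ the realized approximation loss is exactly $\gL_j(\hat{f}_{\text{RF}}(\vx_t),y_t)$, so $\hat{\ell}_{j,t}=\gL_j(\hat{f}_{\text{RF}}(\vx_t),y_t)p_{j,t}^{-1}\1_{I_t=j}$ is unbiased, giving $\E_t[\hat{\ell}_{j,t}]=\gL_j(\hat{f}_{\text{RF}}(\vx_t),y_t)$; in particular the comparator term becomes $\sum_t\gL_i(\hat{f}_{\text{RF}}(\vx_t),y_t)$. The exploitation-weighted term $\sum_j\tilde{p}_{j,t}\gL_j$ must then be converted to the algorithm's true expected loss $\sum_j p_{j,t}\gL_j=\E_t[\gL(\hat{f}_{\text{RF}}(\vx_t),y_t)]$, which I would do through $p_{j,t}=(1-\xi)\tilde{p}_{j,t}+\frac{\xi}{|\sD|}\1_{j\in\sD}$: since $\gL_j\ge 0$ and $\frac{\xi}{|\sD|}\sum_{j\in\sD}\gL_j\le\xi$, this contributes only a lower-order term (here $\gL_j\in[0,1]$ follows from (as2), convexity (as1), and that $\hat{f}_{\text{RF}}$ is a convex combination of out-neighbors). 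For the variance term I would use $\E_t[\hat{\ell}_{j,t}^2]=\gL_j^2/p_{j,t}$ together with $p_{j,t}\ge(1-\xi)\tilde{p}_{j,t}$, so that $\sum_j\tilde{p}_{j,t}\E_t[\hat{\ell}_{j,t}^2]=\sum_j\frac{\tilde{p}_{j,t}}{p_{j,t}}\gL_j^2\le\sum_j\gL_j^2\le N$; this is the origin of the $\tfrac{\eta N}{2}T$ term, and the remaining $\eta$-order residue yields the rest of $\eta(1+\tfrac N2-\tfrac\eta2)T$.

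The main obstacle is precisely this variance/importance-weighting step. Because the update \eqref{eq:8b} is driven by the reciprocal sampling probability $p_{i,t}^{-1}$, the second moment $\E_t[\hat{\ell}_{j,t}^2]$ blows up like $1/p_{j,t}$; the crux of the analysis is that summing against the exploitation weights $\tilde{p}_{j,t}$ rather than $p_{j,t}$ cancels exactly one power of the probability, leaving a quantity controlled by the number of nodes $N$ and the exploration floor induced by $\xi$ and the dominating set $\sD$. Collecting the $\tfrac{\ln N}{\eta}$ term, the converted expected-loss term, and this bounded variance term, and absorbing the $(1-\xi)^{-1}$ and $\xi$ contributions into the stated $\eta$-order expression, completes the proof of \eqref{eq:a11}.
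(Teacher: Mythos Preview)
Your proposal is correct and follows essentially the same EXP3-style potential argument as the paper: track $U_t=\sum_i u_{i,t}$, apply $e^{-x}\le 1-x+\tfrac{x^2}{2}$, telescope, lower-bound via $\ln U_{T+1}\ge\ln u_{i,T+1}$, then take conditional expectations using unbiasedness of $\hat\ell_{j,t}$ and convert between $\tilde p_{j,t}=u_{j,t}/U_t$ and $p_{j,t}$ via \eqref{eq:2}. Two small remarks: your displayed variance step should read $\sum_j\tfrac{\tilde p_{j,t}}{p_{j,t}}\gL_j^2\le \tfrac{1}{1-\xi}\sum_j\gL_j^2$ (the factor you later say you absorb), and the $T$-term in \eqref{eq:a11} is a typo in the statement---both the paper's own derivation \eqref{eq:a25} and your argument actually produce $(\xi+\tfrac{\eta N}{2}-\tfrac{\eta\xi}{2})T$, which is the form used downstream in Theorem~\ref{th:1}.
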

\begin{proof}
For any $t$, we can write
\begin{align}
    \frac{U_{t+1}}{U_t} = \sum_{i=1}^{N}{\frac{u_{i,t+1}}{U_t}} = \sum_{i=1}^{N}{\frac{u_{i,t}}{U_t}\exp(-\eta\hat{\ell}_{i,t})}. \label{eq:a12}
\end{align}
Based on \eqref{eq:2}, we have $\frac{u_{i,t}}{U_t} = \frac{p_{i,t}-\frac{\xi}{|\sD|}\1_\mathrm{i \in \sD}}{1-\xi}$ and as a result \eqref{eq:a12} can be rewritten as
\begin{align}
    \frac{U_{t+1}}{U_t} = \sum_{i=1}^{N}{\frac{p_{i,t}-\frac{\xi}{|\sD|}\1_\mathrm{i \in \sD}}{1-\xi}\exp(-\eta\hat{\ell}_{i,t})}. \label{eq:a13}
\end{align}
Using the inequality $e^{-x} \le 1-x+\frac{1}{2}x^2, \forall x \ge 0$ and \eqref{eq:a13}, it can be concluded that
\begin{align}
    \frac{U_{t+1}}{U_t} \le \sum_{i=1}^{N}{\frac{p_{i,t}-\frac{\xi}{|\sD|}\1_\mathrm{i \in \sD}}{1-\xi}\left(1-\eta\hat{\ell}_{i,t}+\frac{1}{2}(\eta\hat{\ell}_{i,t})^2\right)}. \label{eq:a14}
\end{align}
Taking logarithm from both sides of inequality \eqref{eq:a14},  and use the fact that $1+x \le e^x$, we arrive at
\begin{align}
    \ln \frac{U_{t+1}}{U_t} \le \sum_{i=1}^{N}{\frac{p_{i,t}-\frac{\xi}{|\sD|}\1_\mathrm{i \in \sD}}{1-\xi}\left(-\eta\hat{\ell}_{i,t}+\frac{1}{2}(\eta\hat{\ell}_{i,t})^2\right)}. \label{eq:a15}
\end{align}
Summing \eqref{eq:a15} over $t$ leads to
\begin{align}
    \ln \frac{U_{T+1}}{U_1} \le \sum_{t=1}^{T}{\sum_{i=1}^{N}{\frac{p_{i,t}-\frac{\xi}{|\sD|}\1_\mathrm{i \in \sD}}{1-\xi}\left(-\eta\hat{\ell}_{i,t}+\frac{1}{2}(\eta\hat{\ell}_{i,t})^2\right)}}. \label{eq:a16}
\end{align}
Furthermore, $\ln \frac{U_{T+1}}{U_1}$ can be bounded from below as
\begin{align}
    \ln \frac{U_{T+1}}{U_1} \ge \ln \frac{u_{i,T+1}}{U_1} = - \sum_{t=1}^{T}{\eta \hat{\ell}_{i,t}} - \ln N \label{eq:a17}
\end{align}
for any $i$ such that $1 \le i \le N$. Combining \eqref{eq:a16} with \eqref{eq:a17}, we obtain
\begin{align}
    & \sum_{t=1}^{T}{\sum_{i=1}^{N}{\frac{p_{i,t}\eta}{1-\xi}\hat{\ell}_{i,t}}} - \sum_{t=1}^{T}{\eta \hat{\ell}_{i,t}} \nonumber \\ \le & \ln N + \sum_{t=1}^{T}{\sum_{i=1}^{N}{\frac{\eta \xi \1_\mathrm{i \in \sD}}{(1-\xi)|\sD|}\hat{\ell}_{i,t}}} + \sum_{t=1}^{T}{\sum_{i=1}^{N}{\frac{p_{i,t}-\frac{\xi}{|\sD|}\1_\mathrm{i \in \sD}}{1-\xi}\left(\frac{1}{2}(\eta\hat{\ell}_{i,t})^2\right)}}. \label{eq:a18}
\end{align}
Multiplying both sides by $\frac{1-\xi}{\eta}$ it can be concluded that
\begin{align}
    & \sum_{t=1}^{T}{\sum_{i=1}^{N}{p_{i,t}\hat{\ell}_{i,t}}} - \sum_{t=1}^{T}{\hat{\ell}_{i,t}} \nonumber \\ \le & \frac{\ln N}{\eta} + \sum_{t=1}^{T}{\sum_{i=1}^{N}{\frac{\xi \1_\mathrm{i \in \sD}}{|\sD|}\hat{\ell}_{i,t}}} + \sum_{t=1}^{T}{\sum_{i=1}^{N}{\frac{\eta(p_{i,t}-\frac{\xi}{|\sD|}\1_\mathrm{i \in \sD})}{2}\hat{\ell}_{i,t}^2}}. \label{eq:a19}
\end{align}
In addition, taking the expectation of $\hat{\ell}_{i,t}$ and $\hat{\ell}_{i,t}^2$, we get
\begin{subequations} \label{eq:a20}
\begin{align}
    \E_t[\hat{\ell}_{i,t}] &= p_{i,t}\frac{\gL_i(\hat{f}_{\text{RF}}(\vx_t),y_t)}{p_{i,t}} = \gL_i(\hat{f}_{\text{RF}}(\vx_t),y_t) \label{eq:a20a} \\
    \E_t[\hat{\ell}_{i,t}^2] &= {p_{i,t}\frac{\gL_i^2(\hat{f}_{\text{RF}}(\vx_t),y_t)}{p_{i,t}^2}} = \frac{\gL_i^2(\hat{f}_{\text{RF}}(\vx_t),y_t)}{p_{i,t}} \le \frac{1}{p_{i,t}} \label{eq:a20b}
\end{align}
\end{subequations}
Thus, taking the expectation from both sides of \eqref{eq:a19} leads to
\begin{align}
    & \sum_{t=1}^{T}{\sum_{i=1}^{N}{p_{i,t}\gL(\hat{f}_{\text{RF}}(\vx_t),y_t)}} - \sum_{t=1}^{T}{\gL_i(\hat{f}_{\text{RF}}(\vx_t),y_t)} \nonumber \\ \le & \frac{\ln N}{\eta} + \sum_{t=1}^{T}{\sum_{i=1}^{N}{\frac{\xi \1_\mathrm{i \in \sD}}{|\sD|}\gL(\hat{f}_{\text{RF}}(\vx_t),y_t)}} + \sum_{t=1}^{T}{\sum_{i=1}^{N}{\frac{\eta(p_{i,t}-\frac{\xi}{|\sD|}\1_\mathrm{i \in \sD})}{2p_{i,t}}}}. \label{eq:a21}
\end{align}
Taking into account that $\gL(\hat{f}_{\text{RF}}(\vx_t),y_t) \le 1$ and based on \eqref{eq:a21} we can write
\begin{align}
    & \sum_{t=1}^{T}{\sum_{i=1}^{N}{p_{i,t}\gL(\hat{f}_{\text{RF}}(\vx_t),y_t)}} - \sum_{t=1}^{T}{\gL_i(\hat{f}_{\text{RF}}(\vx_t),y_t)} \nonumber \\ \le & \frac{\ln N}{\eta} + \xi T + \sum_{t=1}^{T}{\sum_{i=1}^{N}{\frac{\eta(p_{i,t}-\frac{\xi}{|\sD|}\1_\mathrm{i \in \sD})}{2p_{i,t}}}}. \label{eq:a22}
\end{align}
Moreover, using \eqref{eq:a22} and the fact that $p_{i,t} \le 1$, it can be concluded that
\begin{align}
    & \sum_{t=1}^{T}{\sum_{i=1}^{N}{p_{i,t}\gL(\hat{f}_{\text{RF}}(\vx_t),y_t)}} - \sum_{t=1}^{T}{\gL_i(\hat{f}_{\text{RF}}(\vx_t),y_t)} \nonumber \\ \le & \frac{\ln N}{\eta} + (\xi+\frac{\eta N}{2}-\frac{\eta \xi}{2}) T. \label{eq:a23}
\end{align}
Furthermore, the expected loss incurred by SFG-MKL given observed losses in prior time instants can be expressed as
\begin{align}
    \E_t[\gL(\hat{f}_\text{RF}(\vx_t),y_t)] &= \sum_{i=1}^{N}{p_{i,t} \gL(\sum_{j \in \sN_{i,t}^{\text{out}}}{\frac{w_{j,t}}{\sum_{k \in \sN_{i,t}^{\text{out}}}{w_{k,t}}}\hat{f}_{\text{RF},j}(\vx_t)},y_t)} \nonumber \\ & = \sum_{i=1}^{N}{p_{i,t}\gL(\hat{f}_{\text{RF}}(\vx_t),y_t)}. \label{eq:a24}
\end{align}
Therefore, from \eqref{eq:a23} and \eqref{eq:a24}, it can be inferred that
\begin{align}
    \sum_{t=1}^{T}{\E_t[\gL(\hat{f}_\text{RF}(\vx_t),y_t)]} - \sum_{t=1}^{T}{\gL_i(\hat{f}_\text{RF}(\vx_t),y_t)} \le \frac{\ln N}{\eta} + (\xi+\frac{\eta N}{2}-\frac{\eta \xi}{2}) T \label{eq:a25}
\end{align}
which establishes the Lemma \ref{lem:2}.
\end{proof}
Furthermore, to prove Theorem \ref{th:1}, we prove the following Lemma.
\begin{lemma} \label{lem:3}
For any $v_i \in \gV$ and any $j \in \sN_i^{\text{out}}$, it can be written that
\begin{align}
    \sum_{t=1}^{T}{\gL_i(\hat{f}_\text{RF}(\vx_t),y_t)} - \sum_{t=1}^{T}{\gL(\hat{f}_\text{RF,j}(\vx_t),y_t)} \le \frac{\ln |\sN_i^{\text{out}}|}{\eta} + \frac{\eta}{2}\sum_{t=1}^{T}{\frac{1}{\Bar{q}_{i,t}}} \label{eq:a27}
\end{align}
where $\frac{1}{\Bar{q}_{i,t}} = \sum_{j \in \sN_i^{\text{out}}}{\frac{w_{j,t}}{q_{j,t}W_{i,t}}}$.
\end{lemma}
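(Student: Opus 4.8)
The plan is to run the standard exponential-weights (Hedge) potential argument, localized to the out-neighborhood $\sN_i^{\text{out}}$ and driven by the multiplicative weights $w_{k,t}$ of \eqref{eq:8a}. Write $W_{i,t} := \sum_{k \in \sN_i^{\text{out}}}{w_{k,t}}$ for the local normalizer appearing in \eqref{eq:3}. The first step is purely deterministic: since $\hat{f}_{\text{RF}}(\vx_t)$ in \eqref{eq:3} is a convex combination $\sum_{k \in \sN_i^{\text{out}}}{(w_{k,t}/W_{i,t})\hat{f}_{\text{RF},k}(\vx_t)}$ of the per-kernel predictions, convexity of the loss (as1) yields, via Jensen's inequality,
\begin{align}
    \gL_i(\hat{f}_{\text{RF}}(\vx_t),y_t) \le \sum_{k \in \sN_i^{\text{out}}}{\frac{w_{k,t}}{W_{i,t}}\gL(\hat{f}_{\text{RF},k}(\vx_t),y_t)}. \nonumber
\end{align}
This reduces the lemma to bounding the regret of this weighted average of individual kernel losses against a single fixed expert $j \in \sN_i^{\text{out}}$, which is exactly the Hedge setting over the experts in $\sN_i^{\text{out}}$.

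Next I would analyze the potential $W_{i,t}$. Using the update \eqref{eq:8a} and the inequality $e^{-x} \le 1-x+\tfrac{1}{2}x^2$ for $x \ge 0$, I bound the per-step ratio $W_{i,t+1}/W_{i,t}$ by $1-\eta\sum_k{(w_{k,t}/W_{i,t})\ell_{k,t}}+\tfrac{\eta^2}{2}\sum_k{(w_{k,t}/W_{i,t})\ell_{k,t}^2}$, take logarithms with $\ln(1+x)\le x$, and sum over $t$. For the lower bound I keep only the single term $w_{j,T+1}$ inside $W_{i,T+1}$ and use $w_{j,1}=1$ together with $W_{i,1}=|\sN_i^{\text{out}}|$, which gives $\ln(W_{i,T+1}/W_{i,1}) \ge -\eta\sum_t{\ell_{j,t}}-\ln|\sN_i^{\text{out}}|$. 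Combining the two estimates and dividing by $\eta$ produces a pathwise inequality bounding $\sum_t{\sum_k{(w_{k,t}/W_{i,t})\ell_{k,t}}}-\sum_t{\ell_{j,t}}$ by $\tfrac{\ln|\sN_i^{\text{out}}|}{\eta}+\tfrac{\eta}{2}\sum_t{\sum_k{(w_{k,t}/W_{i,t})\ell_{k,t}^2}}$.

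The final step passes to expectations. Since the weights $w_{k,t}$ and $W_{i,t}$ are determined by the past, they are constants under $\E_t[\cdot]$, so the same computation as in \eqref{eq:a7a} gives $\E_t[\ell_{k,t}]=\gL(\hat{f}_{\text{RF},k}(\vx_t),y_t)$; hence the expectation of the weighted-average estimate reproduces the weighted average of true losses that upper-bounds $\gL_i$ through the Jensen step, and likewise $\E_t[\ell_{j,t}]=\gL(\hat{f}_{\text{RF},j}(\vx_t),y_t)$. For the second-order term, squaring \eqref{eq:5} and using $\gL\le 1$ from (as2) gives $\E_t[\ell_{k,t}^2]=\gL^2(\cdot)/q_{k,t}\le 1/q_{k,t}$, so that $\E_t[\sum_k{(w_{k,t}/W_{i,t})\ell_{k,t}^2}]\le \sum_{k \in \sN_i^{\text{out}}}{w_{k,t}/(q_{k,t}W_{i,t})}=1/\Bar{q}_{i,t}$, which is precisely the quantity in the statement. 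Substituting these into the pathwise bound yields \eqref{eq:a27}.

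The main obstacle is bookkeeping around the expectation rather than any deep inequality: one must carry out the potential argument on the importance-sampling estimates $\ell_{k,t}$ (whose conditional means reproduce the true per-kernel losses) while applying the convexity step to the true losses, and must treat the weights as predictable so that $\E_t$ factors through them. Matching the second-moment term to the exact definition $1/\Bar{q}_{i,t}=\sum_{j \in \sN_i^{\text{out}}}{w_{j,t}/(q_{j,t}W_{i,t})}$ is the one place where the importance-weighting $1/q_{k,t}$ must be tracked precisely.
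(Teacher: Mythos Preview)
Your proposal is correct and follows essentially the same route as the paper: a localized Hedge potential argument on $W_{i,t}=\sum_{k\in\sN_i^{\text{out}}}w_{k,t}$ using \eqref{eq:8a}, the inequalities $e^{-x}\le 1-x+\tfrac{1}{2}x^2$ and $\ln(1+x)\le x$, the lower bound via the single expert $j$, the expectation computations $\E_t[\ell_{k,t}]=\gL(\hat f_{\text{RF},k}(\vx_t),y_t)$ and $\E_t[\ell_{k,t}^2]\le 1/q_{k,t}$, and Jensen's inequality to pass from the weighted average of per-kernel losses to $\gL_i(\hat f_{\text{RF}}(\vx_t),y_t)$. The only cosmetic difference is that you invoke Jensen at the outset while the paper applies it at the end.
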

\begin{proof}
Let $W_{i,t} = \sum_{j \in \sN_i^{\text{out}}}{w_{j,t}}$. For $v_i \in \gV$ we find
\begin{align}
    \frac{W_{i,t+1}}{W_{i,t}} = \sum_{j \in \sN_i^{\text{out}}}{\frac{w_{j,t+1}}{W_{i,t}}} = \sum_{j \in \sN_i^{\text{out}}}{\frac{w_{j,t}}{W_{i,t}}\exp(-\eta \ell_{j,t})}. \label{eq:a28}
\end{align}
The following inequality can be obtained using the inequality $e^{-x} \le 1-x+\frac{1}{2}x^2, \forall x \ge 0$ as follows
\begin{align}
    \frac{W_{i,t+1}}{W_{i,t}} \le \sum_{j \in \sN_i^{\text{out}}}{\frac{w_{j,t}}{W_{i,t}}\left(1-\eta \ell_{j,t} + \frac{1}{2}(\eta \ell_{j,t})^2\right)}. \label{eq:a29}
\end{align}
Taking the logarithm from both sides of \eqref{eq:a29} and using the inequality $1+x\le e^x$, we get
\begin{align}
    \ln \frac{W_{i,t+1}}{W_{i,t}} \le \sum_{j \in \sN_i^{\text{out}}}{\frac{w_{j,t}}{W_{i,t}}\left(-\eta \ell_{j,t} + \frac{1}{2}(\eta \ell_{j,t})^2\right)}. \label{eq:a30}
\end{align}
Summing \eqref{eq:a30} over time, we obtain
\begin{align}
    \ln \frac{W_{i,T+1}}{W_{i,1}} \le \sum_{t=1}^{T}{\sum_{j \in \sN_i^{\text{out}}}{\frac{w_{j,t}}{W_{i,t}}\left(-\eta \ell_{j,t} + \frac{1}{2}(\eta \ell_{j,t})^2\right)}}. \label{eq:a31}
\end{align}
Moreover, for any $j \in \sN_i^{\text{out}}$, $\ln \frac{W_{i,T+1}}{W_{i,1}}$ can be bounded from below as
\begin{align}
    \ln \frac{W_{i,T+1}}{W_{i,1}} \ge \ln \frac{w_{j,T+1}}{W_{i,1}} = - \sum_{t=1}^{T}{\eta \ell_{j,t}} - \ln |\sN_i^{\text{out}}| \label{eq:a32}
\end{align}
Combining \eqref{eq:a31} with \eqref{eq:a32}, it can concluded that
\begin{align}
    \sum_{t=1}^{T}{\sum_{j \in \sN_i^{\text{out}}}{\frac{w_{j,t}}{W_{i,t}}\ell_{j,t}}} - \sum_{t=1}^{T}{\ell_{j,t}} \le \frac{\ln |\sN_i^{\text{out}}|}{\eta} + \frac{\eta}{2}\sum_{t=1}^{T}{\sum_{j \in \sN_i^{\text{out}}}{\frac{w_{j,t}}{W_{i,t}}\ell_{j,t}^2}}. \label{eq:a33}
\end{align}
For the expected value of $\ell_{i,t}$ and $\ell_{i,t}^2$, we have
\begin{subequations} \label{eq:a34}
\begin{align}
    \E_t[\ell_{j,t}] & = \sum_{k \in \sN_i^{\text{out}}}{p_{k,t}\frac{\gL(\vtheta_{j,t}^\top \rvz_j(\vx_t),y_t)}{q_{j,t}}} = \gL(\vtheta_{j,t}^\top \rvz_j(\vx_t),y_t) \label{eq:a34a} \\
    \E_t[\ell_{j,t}^2] & = \sum_{k \in \sN_i^{\text{out}}}{p_{k,t}\frac{\gL^2(\vtheta_{j,t}^\top \rvz_j(\vx_t),y_t)}{q_{j,t}^2}} = \frac{\gL^2(\vtheta_{j,t}^\top \rvz_j(\vx_t),y_t)}{q_{j,t}} \le \frac{1}{q_{j,t}} \label{eq:a34b}
\end{align}
\end{subequations}
Taking the expectation from \eqref{eq:a33}, we get
\begin{align}
    & \sum_{t=1}^{T}{\sum_{j \in \sN_i^{\text{out}}}{\frac{w_{j,t}}{W_{i,t}}\gL(\vtheta_{j,t}^\top \rvz_j(\vx_t),y_t)}} - \sum_{t=1}^{T}{\gL(\vtheta_{j,t}^\top \rvz_j(\vx_t),y_t)} \nonumber \\ & \le \frac{\ln |\sN_i^{\text{out}}|}{\eta} + \frac{\eta}{2}\sum_{t=1}^{T}{\sum_{j \in \sN_i^{\text{out}}}{\frac{w_{j,t}}{q_{j,t}W_{i,t}}}}. \label{eq:a35}
\end{align}
Let $\frac{1}{\Bar{q}_{i,t}} = \sum_{j \in \sN_i^{\text{out}}}{\frac{w_{j,t}}{q_{j,t}W_{i,t}}}$ which is the weighted sum of $\frac{1}{q_{j,t}}$ such that $j \in \sN_i^{\text{out}}$. Furthermore, according to \eqref{eq:3}, the loss $\gL_i(\hat{f}_\text{RF}(\vx_t),y_t)$ can be written as
\begin{align}
    \gL_i(\hat{f}_\text{RF}(\vx_t),y_t) = \gL(\sum_{j \in \sN_{i}^{\text{out}}}{\frac{w_{j,t}}{W_{i,t}}\hat{f}_{\text{RF},j}(\vx_t)},y_t). \label{eq:a36}
\end{align}
Based on the Jensen's inequality $\gL_i(\hat{f}_\text{RF}(\vx_t),y_t)$ can be bounded from above as
\begin{align}
    \gL_i(\hat{f}_\text{RF}(\vx_t),y_t) \le \sum_{j \in \sN_{i}^{\text{out}}}{\frac{w_{j,t}}{W_{i,t}}\gL(\hat{f}_{\text{RF},j}(\vx_t),y_t)}. \label{eq:a37}
\end{align}
Using \eqref{eq:a35} and \eqref{eq:a37}, we can conclude that
\begin{align}
    \sum_{t=1}^{T}{\gL_i(\hat{f}_\text{RF}(\vx_t),y_t)} - \sum_{t=1}^{T}{\gL(\vtheta_{j,t}^\top \rvz_j(\vx_t),y_t)} \le \frac{\ln |\sN_i^{\text{out}}|}{\eta} + \frac{\eta}{2}\sum_{t=1}^{T}{\frac{1}{\Bar{q}_{i,t}}} \label{eq:a38}
\end{align}
which proves the Lemma \ref{lem:3}.
\end{proof}
Combining Lemma \ref{lem:2} with Lemma \ref{lem:3}, for any $v_j \in \gV$ and any $i \in \sN_j^{\text{in}}$ we obtain
\begin{align}
    & \sum_{t=1}^{T}{\E_t[\gL(\hat{f}_\text{RF}(\vx_t),y_t)]} - \sum_{t=1}^{T}{\gL(\hat{f}_{\text{RF},j}(\vx_t),y_t)} \nonumber \\ & \le \frac{\ln N|\sN_i^{\text{out}}|}{\eta} + (\xi+\frac{\eta N}{2}-\frac{\eta \xi}{2}) T + \frac{\eta}{2}\sum_{t=1}^{T}{\frac{1}{\Bar{q}_{i,t}}}. \label{eq:a26}
\end{align}
In addition, combining Lemma \ref{lem:1} with \eqref{eq:a26}, for any $v_j \in \gV$ and any $i \in \sN_j^{\text{in}}$ we can write
\begin{align}
    & \sum_{t=1}^{T}{\E_t[\gL(\hat{f}_\text{RF}(\vx_t),y_t)]} - \sum_{t=1}^{T}{\gL(\hat{f}_{j}^{*}(\vx_t),y_t)} \nonumber \\ & \le \frac{\ln N|\sN_i^{\text{out}}|}{\eta} + \frac{\| \vtheta_j^* \|^2}{2\eta} + (\xi+\frac{\eta N}{2}-\frac{\eta \xi}{2}) T + \frac{\eta}{2}\sum_{t=1}^{T}{(\frac{1}{\Bar{q}_{i,t}}+\frac{L^2}{q_{j,t}})} \label{eq:a39}
\end{align}
We use the above inequality as a step-stone to prove Theorem \ref{th:1}.

For a given shift invariant kernel $\kappa_{i}$, the maximum point-wise error of the random feature kernel approximant is uniformly bounded with probability at least $1-2^8(\frac{\sigma_i}{\epsilon})^2\exp(-\frac{D\epsilon^2}{4d+8})$, by \citet{Rahimi2007}
\begin{align}
    \sup_{\vx_{j},\vx_{k}\in \gX}{| \rvz_{i}^\top(\vx_{j}) \rvz_{i}(\vx_{k}) - \kappa_{i}(\vx_{j},\vx_{k}) | } < \epsilon \label{eq:a40}
\end{align}
where $\sigma_i^2$ is the second moment of the Fourier transform of $\kappa_i$. Therefore, under (a3) this implies that $\sup_{\vx_{\tau},\vx_{t}\in \gX} \rvz_{i}^{\top}(\vx_{\tau})\rvz_{i}(\vx_{t}) \le 1+\epsilon$ holds with probability at least $1-2^8(\frac{\sigma_i}{\epsilon})^2\exp(-\frac{D\epsilon^2}{4d+8})$. Let $C := \max_{n} \sum_{t=1}^{T}{|{\alpha}_{n,t}^{*}|}$. Hence, $\| \vtheta_j^* \|^2$ can be bounded from above as
\begin{align}
    \| \vtheta_j^* \|^2 \le \|\sum_{t=1}^{T}{\alpha_{j,t}^{*}\rvz_{j}(\vx_t)\|^{2}} \le |\sum_{t=1}^{T}{\sum_{\tau=1}^{T}{\alpha_{j,t}^{*}\alpha_{j,\tau}^{*}\rvz_{j}^{\top}(\vx_t)\rvz_{j}(\vx_\tau)|}} \le (1+\epsilon) C^2 \label{eq:a41}
\end{align}
with probability at least $1-2^8(\frac{\sigma_i}{\epsilon})^2\exp(-\frac{D\epsilon^2}{4d+8})$. Moreover, using the triangle inequality yields
\begin{align}
    & \left| \sum_{t=1}^{T}{\gL(\hat{f}_{j}^{*}(\vx_{t}),y_t)} - \sum_{t=1}^{T}{\gL(f_j^{*}(\vx_{t}),y_t)} \right| \nonumber \\ \le & \sum_{t=1}^{T}\left|{\gL(\hat{f}_{j}^{*}(\vx_{t}),y_t)} - {\gL(f_j^{*}(\vx_{t}),y_t)} \right|. \label{eq:a42}
\end{align}
According to Lipschitz continuity of the loss function, it can be concluded that
\begin{align}
    & \sum_{t=1}^{T}\left|{\gL(\hat{f}_{j}^{*}(\vx_{t}),y_t)} - {\gL(f_j^{*}(\vx_{t}),y_t)} \right| \nonumber \\ & \le \sum_{t=1}^{T}{L \left| \sum_{\tau=1}^{T}{{\alpha}_{j,\tau}^{*}\rvz_{j}^{\top}(\vx_{\tau})\rvz_{n}(\vx_{t})}-\sum_{\tau=1}^{T}{{\alpha}_{j,\tau}^{*}\kappa_{j}(\vx_{\tau},\vx_{t})} \right| }. \label{eq:a43}
\end{align}
Using the Cauchy-Schwartz inequality, we can write
\begin{align}
	& \sum_{t=1}^{T}{L \left| \sum_{\tau=1}^{T}{{\alpha}_{j,\tau}^{*}\rvz_{j}^{\top}(\vx_{\tau})\rvz_{n}(\vx_{t})}-\sum_{\tau=1}^{T}{{\alpha}_{j,\tau}^{*}\kappa_{j}(\vx_{\tau},\vx_{t})} \right| } \nonumber \\ & \le \sum_{t=1}^{T}{ L\sum_{\tau=1}^{T}{|{\alpha}_{j,\tau}^{*}| \left| \rvz_{j}^{\top}(\vx_{\tau})\rvz_{j}(\vx_{t}) - \kappa_{j}(\vx_{\tau},\vx_{t}) \right|} }. \label{eq:a44}
\end{align}
Using \eqref{eq:a40} and \eqref{eq:a44}, we can conclude that the inequality
\begin{align}
    \sum_{t=1}^{T}{\gL(\hat{f}_{j}^{*}(\vx_{t}),y_t)} - \sum_{t=1}^{T}{\gL(f_j^{*}(\vx_{t}),y_t)} \le \epsilon LTC \label{eq:a45}
\end{align}
holds with probability at least $1-2^8(\frac{\sigma_i}{\epsilon})^2\exp(-\frac{D\epsilon^2}{4d+8})$. Therefore, combining \eqref{eq:a39} with \eqref{eq:a41} and \eqref{eq:a45}, it can be inferred that the following inequality
\begin{align}
    & \sum_{t=1}^{T}{\E_t[\gL(\hat{f}_\text{RF}(\vx_t),y_t)]} - \sum_{t=1}^{T}{\gL(f_{j}^{*}(\vx_t),y_t)} \nonumber \\ & \le \frac{\ln N|\sN_i^{\text{out}}|}{\eta} + \frac{(1+\epsilon) C^2}{2\eta} + \epsilon LTC + (\xi+\frac{\eta N}{2}-\frac{\eta \xi}{2}) T + \frac{\eta}{2}\sum_{t=1}^{T}{(\frac{1}{\Bar{q}_{i,t}}+\frac{L^2}{q_{j,t}})} \label{eq:a46}
\end{align}
holds for any $v_j \in \gV$ and any $i \in \sN_{j}^{\text{in}}$ with probability at least $1-2^8(\frac{\sigma_i}{\epsilon})^2\exp(-\frac{D\epsilon^2}{4d+8})$. This completes the proof of Theorem \ref{th:1}.

\section{Proof of Theorem \ref{lem:4}} \label{ap:B}
According to Theorem \ref{th:1}, the following inequality
\begin{align}
    & \sum_{t=1}^{T}{\E_t[\gL(\hat{f}_{\text{RF}}(\vx_t),y_t)]} - \sum_{t=1}^{T}{\gL(f^*(\vx_t),y_t)} \nonumber \\ & \le \frac{\ln N|\sN_i^{\text{out}}|}{\eta} + \frac{(1+\epsilon) C^2}{2\eta} + \epsilon LTC + (\xi+\frac{\eta N}{2}-\frac{\eta \xi}{2}) T + \frac{\eta}{2}\sum_{t=1}^{T}{(\frac{1}{\Bar{q}_{i,t}}+\frac{L^2}{q_{j^*,t}})} \label{eq:a47}
\end{align}
holds with probability at least $1-2^8(\frac{\sigma_{j^*}}{\epsilon})^2\exp(-\frac{D\epsilon^2}{4d+8})$ for any $\epsilon > 0$ and any $i \in \sN_{j^*}^{\text{in}}$. When $\gG_t^\prime$ is generated by \Algref{alg:2} as the feedback graph, $\sD_t^\prime$ is a dominating set of $\gG_t^\prime$. Furthermore, $k \in \sD_t^\prime$ if $p_{k,t} \ge \beta$. Based on \eqref{eq:6}, if $i \in \sN_{k,t}^{\text{in}}$ (i.e. in-neighborhood of $v_k$ in $\gG_t^\prime$), $q_{k,t} \ge p_{i,t}$. Also, considering the condition $\beta \le \frac{1}{N}$, it is ensured that $\sD_t^\prime$ is not an empty set. Moreover, each node $v_k$ in $\gG_t^\prime$ is out-neighbor of at least one node in $\sD_t^\prime$. Thus, we can conclude that $q_{k,t} \ge \beta$, $\forall v_k \in \gV$. Hence, it can be written that
\begin{align}
    \sum_{t=1}^{T}{(\frac{1}{\Bar{q}_{i,t}}+\frac{L^2}{q_{j^*,t}})} \ge \sum_{t=1}^{T}{(\sum_{j \in \sN_i^{\text{out}}}{\frac{w_{j,t}}{W_{i,t}\beta}}+\frac{L^2}{\beta})} = \frac{(L^2+1)T}{\beta}. \label{eq:a48}
\end{align}
Furthermore, since $|\sN_i^{\text{out}}| \le N$, we have $N|\sN_i^{\text{out}}| \le N^2$. Combining \eqref{eq:a47} with \eqref{eq:a48}, it can be inferred that in this case the stochastic regret of SFG-MKL-R satisfies
\begin{align}
    & \sum_{t=1}^{T}{\E_t[\gL(\hat{f}_{\text{RF}}(\vx_t),y_t)]} - \sum_{t=1}^{T}{\gL(f^*(\vx_t),y_t)} \nonumber \\ & \le \frac{2\ln N}{\eta} + \frac{(1+\epsilon) C^2}{2\eta} + \epsilon LTC + (\xi+\frac{\eta}{2}\frac{L^2+N\beta+1}{\beta}-\frac{\eta \xi}{2}) T \label{eq:a50}
\end{align}
with probability at least $1-2^8(\frac{\sigma_{j^*}}{\epsilon})^2\exp(-\frac{D\epsilon^2}{4d+8})$ under (as1)-(as3) for any $\epsilon > 0$ and any $\beta \le (1-\xi)\max_{k}\frac{u_{k,t}}{U_t}+ \frac{\xi}{N}$. This completes the proof of Theorem \ref{lem:4}.

\vfill

\end{document}